\pdfoutput=1

\documentclass[11pt]{article}

\usepackage[preprint]{acl}

\usepackage{times}
\usepackage{latexsym}

\usepackage[T1]{fontenc}

\usepackage[utf8]{inputenc}

\usepackage{microtype}

\usepackage{inconsolata}
\usepackage{graphicx}
\usepackage{amsmath}
\usepackage{enumerate}
\usepackage{array}
\usepackage{booktabs}
\usepackage{multirow}
\usepackage{subcaption}
\usepackage{tcolorbox}
\usepackage{listings}
\usepackage{amssymb}
\usepackage{amsthm}
\usepackage{algorithm}
\usepackage{algorithmic}
\usepackage{bm}
\usepackage{tabularx}
\usepackage{newfloat}
\usepackage{longtable}
\usepackage{algorithm}
\usepackage{algorithmic}
\usepackage{tikz}
\usepackage{bm}
\usepackage{subcaption}
\usepackage{multicol}
\usepackage{tabularx}
\newtheorem{definition}{Definition}
\newtheorem{theorem}[]{Theorem}
\usepackage{makecell}

%
%

\title{FIRE\raisebox{-0.27ex}{\tikz \node[scale=1.0, inner sep=0pt, outer sep=0pt] {\includegraphics[height=1em]{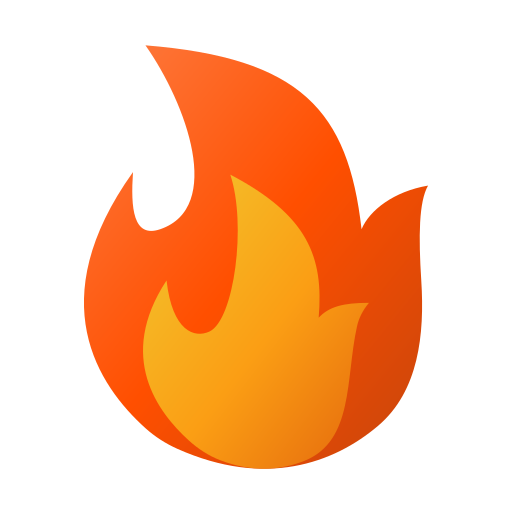}};}: Flexible Integration of Data Quality Ratings for Effective Pretraining}

\author{
    Liangyu Xu\textsuperscript{4}$^{\ast}$, \ Xuemiao Zhang\textsuperscript{1,4}$^{\ast}$, \ 
    Feiyu Duan\textsuperscript{2,4}\thanks{Equal contribution.},  \ 
    \textbf{Sirui Wang\textsuperscript{3,4}\thanks{Corresponding author.}}, \ \\
     \textbf{Rongxiang Weng\textsuperscript{4}}, \ 
    \textbf{Jingang Wang\textsuperscript{4}}, \ 
    \textbf{Xunliang Cai\textsuperscript{4}}
    \\
    \textsuperscript{1} Peking University\quad
    \textsuperscript{2} Beihang University\quad
    \textsuperscript{3} Tsinghua University\quad
    \textsuperscript{4} Meituan \\
    \texttt{zhangxuemiao@pku.edu.cn}\quad 
    \texttt{duanfeiyu@buaa.edu.cn} \quad \\
    \texttt{\{xuliangyu02, wangsirui, wengrongxiang, wangjingang02, caixunliang\}@meituan.com} \\
  } 

\begin{document}
\maketitle
\begin{abstract}
Selecting high-quality data can improve the pretraining efficiency of large language models (LLMs). Existing methods generally rely on heuristic techniques or single quality signals, limiting their ability to evaluate data quality comprehensively. In this work, we propose FIRE, a flexible and scalable framework for integrating multiple data quality raters, which allows for a comprehensive assessment of data quality across various dimensions. FIRE aligns multiple quality signals into a unified space, and integrates diverse data quality raters to provide a comprehensive quality signal for each data point. Further, we introduce a progressive data selection scheme based on FIRE that iteratively refines the selection of high-quality data points. Extensive experiments show that FIRE outperforms other data selection methods and significantly boosts pretrained model performance across a wide range of downstream tasks, {while requiring less than 37.5\% of the training data needed by the \textit{Random} baseline to reach the target performance.}

\end{abstract}

\section{Introduction}
Large language models (LLMs) have demonstrated remarkable performance by utilizing large-scale Transformers to pretrain on trillions of tokens. However, due to the constraints imposed by scaling laws \cite{kaplan2020scaling}, LLMs are quickly nearing their capacity and data limits. As a result, efforts to improve LLM performance have increasingly concentrated on optimizing the quality of pretraining data. 

Numerous studies indicate that effective data selection can significantly enhance the convergence speed and generalization capability of LLMs \cite{engstrom2024dsdm,wettig2024qurating,gao2025metadataconditioningaccelerateslanguage}. Traditional methods predominantly rely on heuristic techniques, such as rule-based filtering \cite{rae2021scaling,raffel2020exploring}, deduplication \cite{abbas2023semdedup,tirumala2024d4}, and assessing proximity to high-quality corpora \cite{xie2023data}. Additionally, some work has focused on improving the evaluation of pretraining data quality by querying authoritative LLMs to determine whether the texts meet specific criteria \cite{wettig2024qurating,sachdeva2024train}. 

\begin{figure}[t]
    \centering
    \includegraphics[width=0.9\linewidth]{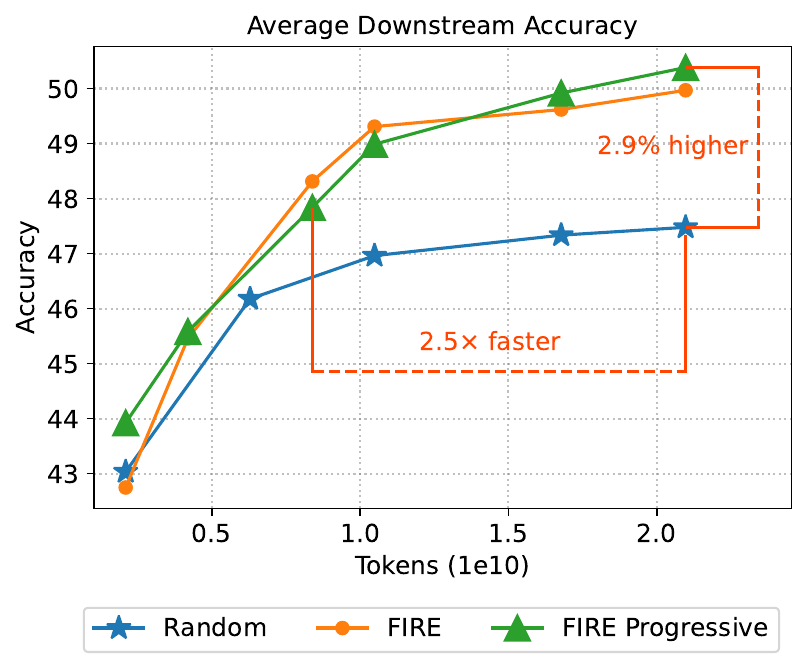}
        \caption{Downstream accuracy with respect to {pretraining} tokens for Random, FIRE, and FIRE Progressive.}
    \label{fig:avg_score_steps}
\end{figure}

Intuitively, assessing the quality of a text involves analyzing it across multiple dimensions. Nevertheless, the methods mentioned above evaluate data quality based on individual aspects, lacking a comprehensive assessment of the data's overall quality. Building on this limitation, the success of querying LLMs \cite{sachdeva2024train} has inspired the straightforward idea of merging various quality rating criteria into a single prompt to collect comprehensive quality signals from authoritative LLMs. However, experimental findings show that this approach considerably weakens the performance of LLMs, as the excessive number of rules makes it challenging for LLMs to follow (further details can be found in Appendix \ref{subsec:Prompt merge effect}). The challenge of adhering to multiple rules underscores the necessity for a more sophisticated strategy to integrate various quality signals effectively.


In this paper, we propose FIRE, a \textbf{F}lexible and scalable framework for \textbf{I}ntegrating multiple data quality \textbf{R}aters, designed to enable \textbf{E}ffective pretraining of LLMs. Initially, we introduce an alignment method to tackle the issue of inconsistent ratings from multiple raters. This method involves ranking the data based on the scores from the original raters and then partitioning it into quantiles. We assess the probability that the data within each quantile is of higher quality (win rate) compared to a reference subset, using this probability as the aligned rating. By fitting a win-rate-quantile curve for each rater, we effectively map the ratings from multiple raters into a unified rating space. Subsequently, to derive a comprehensive signal representing the overall data quality, we integrate the aligned ratings of multiple raters, considering both the intrinsic reliability and orthogonality of the raters. Further, we introduce a progressive data selection scheme based on FIRE that iteratively refines the selection of high-quality data points, balancing computational complexity with the refinement of orthogonality.

Extensive experiments demonstrate that by applying integrated ratings from multiple raters, our method achieves superior results across a variety of downstream tasks. Figure \ref{fig:avg_score_steps} illustrates that FIRE significantly enhances the pretrained model.
We summarize our main contributions as follows:

(1) We propose FIRE, a flexible and scalable framework for integrating multiple data quality raters. FIRE aligns ratings from multiple raters into a unified space and integrates them to provide a comprehensive quality signal for each data point.

(2) We introduce a progressive data selection scheme based on FIRE that iteratively refines the selection of high-quality data points. It achieves a balance between computational complexity and the refinement of orthogonality.

(3) Extensive experiments demonstrate that FIRE enhances the pretrained model's performance by an average of 2.9\%, {while requiring less than 37.5\% of the training data needed by the \textit{Random} baseline to reach the target performance.}

\section{FIRE: Flexible Integration of Quality Ratings}
\label{section:FIRE}
\subsection{Overview of the Method}
We propose FIRE, a method that flexibly integrates multiple raters to comprehensively evaluate data quality. It involves two key processes: \textbf{(a) Rating Alignment} and \textbf{(b) Rater Integration}. Figure \ref{fig:fire} illustrates the overall framework of FIRE. {Many off-the-shelf raters exist in practice. To be integrated by FIRE, a rater must provide a scalar score for each data point and be empirically validated for effectiveness.}

First, we propose an alignment method to map ratings from multiple raters into a unified rating space. Specifically, we involve the probability that the data in each quantile is of higher quality (win rate) compared to a reference subset as the aligned rating. By fitting a win-rate-quantile curve for each rater, we effectively map the ratings from multiple raters into a unified rating space. It's worth noting that the alignment process allows us to quantify the intrinsic reliability of each rater, defined by the win rate of the best data subset selected by the rater relative to the reference subset, thereby reflecting the rater's performance.

We then integrate the aligned ratings of multiple raters, considering both the intrinsic reliability and orthogonality of the raters. We construct an orthogonality graph and calculate centrality through PageRank\cite{page1999pagerank} to quantify the independence among raters. The integrated rating is given by:
\begin{equation}
I(x) = \mathbf{A}(x)^T (\mathbf{o} \odot \bm{\gamma})   
\end{equation}
where \(\mathbf{A}(x)\) is the vector of aligned ratings for data point \(x\), \(\mathbf{o}\) is the overall orthogonality vector, \(\bm{\gamma}\) represents the intrinsic reliability vector of the raters, and \(\odot\) denotes the element-wise product.
\begin{figure}[t]
    \centering
    \includegraphics[width=\linewidth]{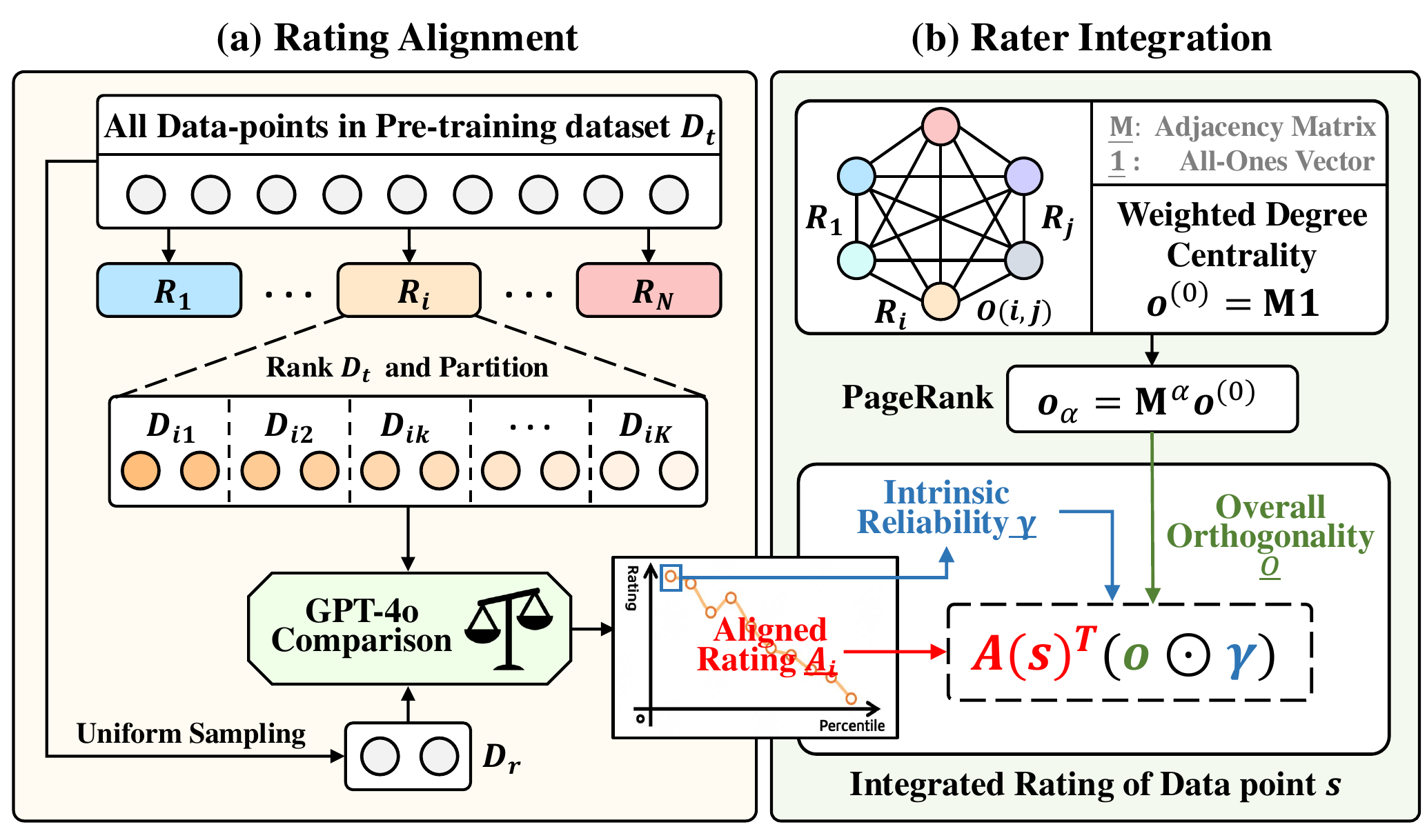}
    \caption{Overall framework of FIRE, which contains two processes: (a) Rating Alignment and (b) Rater Integration.}
    \label{fig:fire}
\end{figure}
\subsection{Rating Alignment}
Aligning ratings from multiple data quality raters is crucial for achieving a credible integrated rater. This process involves standardizing ratings to a consistent scale and eliminating the significant differences in raters' high-quality thresholds, which are the score thresholds that distinguish data contributing positively to pretraining. Appendix \ref{subsec:Analysis of the necessity of rating alignment} offers further analysis on the importance of rating alignment.

Given a pretraining dataset \( \mathcal{D}_{t} \) and multiple raters \( R_1, R_2, \ldots, R_n \), we propose a method to consolidate these ratings into a unified rating space:

\paragraph{Step 1: Sample reference subset.} Uniformly sample a subset \( \mathcal{D}_{r} \) from the pretraining data \( \mathcal{D}_{t} \). {It is worth noting that the data quality distributions of $\mathcal{D}{r}$ and $\mathcal{D}{t}$ are consistent. This consistency allows us to use $\mathcal{D}{r}$ as a representative reference set: if a subset of data is of higher quality than data in $\mathcal{D}{r}$, it can be considered to exceed the typical quality of $\mathcal{D}_{t}$, and vice versa.}

\paragraph{Step 2: Sort and partition data.}Sort \( \mathcal{D}_{t} \) according to \( R_i \), and partition the sorted data into \( k \) intervals.

\paragraph{Step 3: Compare and calculate win rates.} Randomly sample a subset \( \mathcal{D}_{ij} \) from each interval, ensuring that \( |\mathcal{D}_{ij}| = |\mathcal{D}_{r}| \). Use GPT-4o \cite{islam2024gpt} to evaluate how the \( \mathcal{D}_{ij} \) samples impact pretraining in comparison to the reference dataset \( \mathcal{D}_{r} \). Then, calculate the win rate \( w_{ij} \) for each interval \( j \):
\begin{equation}
    w_{ij} = \frac{| \{ x \in \mathcal{D}_{ij} \mid \text{GPT-4o: } x > y, \, y \sim \mathcal{D}_{r} \} |}{| \mathcal{D}_{ij} |}
\end{equation}
where \( \mathcal{D}_{ij} \) signifies the subset sampled from the \( j \)-th rating interval for rater \( R_i \). The win rate \( w_{ij} \) is the proportion of samples \( x \) in \( \mathcal{D}_{ij} \) that GPT-4o determines have higher quality than the comparison sample \( y \) from \( \mathcal{D}_{r} \). {Calculating win rates relative to $\mathcal{D}_{r}$ makes ratings from multiple raters comparable. Moreover, since we sample only 1,000 data points per interval for comparison, the computational cost of this process remains low.} The prompt for GPT-4o is detailed in Appendix \ref{subsec:Prompt for GPT-4o to compare data quality}. {We demonstrate the reliability of using GPT-4o for quality comparison in Appendix \ref{subsec:raog}. }


\paragraph{Step 4: Fitting a win-rate-percentile function.} Employ \( w_{ij} \) as the aligned rating for the midpoint of \( j \)-th rating interval of $R_i$, denoted by (\( p_{j} \), \( w_{ij} \)). Construct a continuous win rate-percentile function from these coordinates using polynomial spline interpolation (detailed in Appendix \ref{subsec:Polynomial spline interpolation for win-rate-percentile function}).

For any data point, we can find the aligned rating from a specific rater by applying the rater's win-rate-percentile function to its original rating and percentile. We apply the alignment method to the ratings of 4 raters on the SlimPajama dataset. The win rates in different percentiles and the fitted functions are provided in Appendix \ref{subsec:Ratings distribution illustration}. It is worth noting that since \(w_{i0}\) represents the win rate of the best data subset selected by rater \(i\) relative to the reference subset, it reflects the performance of rater \(i\) to a certain extent. Therefore, we can use \(w_{i0}\) as the intrinsic reliability of rater \(i\), i.e., \(\gamma_i = W_{i0}\).

\subsection{Rater Integration}
Suppose for raters \( R_1, R_2, \ldots, R_n \), each rater corresponds to a standard basis vector \( \mathbf{v}_1, \mathbf{v}_2, \ldots, \mathbf{v}_n \) in the quality space.  The integrated quality vector \( \mathbf{q}(x) \) of data point \( x \) can be expressed as:
\begin{equation}
\mathbf{q}(x) = \sum_{j=1}^{n} \gamma_j   A_j(x) \mathbf{v}_j    
\end{equation}
where \(\gamma_j\) denotes the intrinsic reliability of rater \(j\), $A_k(x)$ denotes the rating for data point $x$ from rater $R_k$ after alignment.
Ideally, if \( \mathbf{v}_1, \mathbf{v}_2, \ldots, \mathbf{v}_n \) form an orthogonal basis, it is reasonable to measure the overall quality of the data using the L1 norm of \(\mathbf{q}(x)\), as it represents the sum of the scores of data point \(x\) across various orthogonal quality dimensions. However, \( \mathbf{v}_1, \mathbf{v}_2, \ldots, \mathbf{v}_n \) are not necessarily completely independent. There may be raters \( R_i \) and \( R_j \) with a correlation coefficient \( \rho > 0 \) and directly adding the corresponding aligned ratings would increase the weight of a particular quality dimension. To mitigate this issue, we define $O(i,j)$ to quantify the orthogonality of two raters $i$ and $j$ (the formalization of $O(i,j)$ can be found in Appendix \ref{subsec:Formalization of Orthogonality}). For a rater \( R_i \), we apply the sum of its orthogonality with all other raters to weight its rating. If a rater is highly correlated with others, we use a lower orthogonality to penalize. So the integrated rating for data point $x$ can expressed as
\begin{equation} 
I(x) = \sum_{j=1}^{n} \gamma_j o_j A_j(x) 
\label{eq:5} 
\end{equation}
where \( o_j = \sum_{\substack{k=1 \\ k \neq j}}^{n} O(j, k) \) is a quantification of the overall orthogonality \( R_j \) with other raters.

Inspired by Equation (\ref{eq:5}), we find that integration of ratings weighted by orthogonality can be formalized to the centrality problem of graph theory. Formally, we define orthogonality graph of a rater as follows:

\begin{definition}[Orthogonality Graph]\label{def:1}
An orthogonality graph is a complete graph where the vertices \( V_i \) represent the raters \( R_i \). The weight of the edge between two vertices is the orthogonality \( O(i,j) \) between the two raters.
\end{definition}

Based on Definition \ref{def:1}, we provide the following theorem:
\begin{theorem} 
\label{theorem1}
The overall orthogonality $o_i$ of a rater \( R_i \) with other raters can be quantified as \textbf{weighted degree centrality} of the corresponding vertex \( V_i \) in the orthogonality graph.
\end{theorem}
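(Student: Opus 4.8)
The plan is to establish the claim by a direct unpacking of definitions, since the two quantities in question turn out to be literally the same sum. First I would fix the notion of weighted degree centrality being used: for a vertex $v$ of a weighted graph $G = (V, E, w)$, its (unnormalized) weighted degree centrality is $C_D(v) = \sum_{u:\, \{v,u\} \in E} w(\{v,u\})$, the total weight of the edges incident to $v$ --- the standard weighted analogue of ordinary degree centrality, which merely counts incident edges.

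Next I would specialize this to the orthogonality graph of Definition~\ref{def:1}. That graph is the complete graph on vertices $\{V_1, \dots, V_n\}$ with no self-loops, so the edges incident to $V_i$ are exactly the pairs $\{V_i, V_j\}$ for $j \in \{1, \dots, n\} \setminus \{i\}$, and by construction the weight of $\{V_i, V_j\}$ is the orthogonality $O(i,j)$. Substituting into the formula for $C_D$ yields
\begin{equation*}
C_D(V_i) \;=\; \sum_{\substack{j=1 \\ j \neq i}}^{n} O(i,j),
\end{equation*}
which is term-for-term identical to the defining expression $o_i = \sum_{k=1,\, k \neq i}^{n} O(i,k)$ introduced together with Equation~(\ref{eq:5}). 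Hence $o_i = C_D(V_i)$, establishing the theorem.

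There is no genuinely hard step here; the argument is a definition chase. The only points deserving care are invoking completeness of the graph --- so that every rater $R_j$ with $j \neq i$ contributes a term and the sum over edges matches the full sum defining $o_i$ --- and noting the absence of self-loops, so that no spurious $O(i,i)$ term intrudes. The one mild modeling choice is to adopt the unnormalized form of weighted degree centrality rather than a variant divided by $n-1$ or by the total edge weight; this is the form that matches $o_i$ as written, and a quick sanity check (setting all $O(i,j) = 1$ gives $o_i = n-1$, recovering ordinary degree centrality) confirms it is the appropriate generalization.
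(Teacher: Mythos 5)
Your proposal is correct and follows essentially the same route as the paper's own proof: unpack the (unnormalized) weighted degree centrality of $V_i$ as the sum of incident edge weights, use completeness of the orthogonality graph to identify those edges with the pairs $\{V_i,V_j\}$, $j\neq i$, and observe that the resulting sum is exactly $o_i=\sum_{j\neq i}O(i,j)$. Your added remarks on the absence of self-loops and on choosing the unnormalized variant of centrality are sensible clarifications but do not change the argument.
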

We give Theorem \ref{theorem1}'s proof in Appendix \ref{sec:Proof of theorems}.
Let \( \mathbf{o} = [o_1, o_2, \ldots, o_n]^T \) denote the overall orthogonality vector, where \( o_i \) represents the overall orthogonality of rater \( R_i \). Define \( \mathbf{M} \) as the adjacency matrix of the orthogonality graph, such that \( \mathbf{M}_{ij} = O(i,j) \). Additionally, let \( \mathbf{1} \) be the all-ones vector. We can then derive the following:
\begin{equation}
\mathbf{o}^{(0)} = \mathbf{M} \mathbf{1}
\end{equation}

Considering that in a multi-rater setting, the independence of \( R_i \) might be affected by the orthogonality between different raters \( R_j \) and \( R_k \), we propose an iterative formula, analogous to PageRank \cite{page1999pagerank}:
\begin{equation}
\mathbf{o}^{(k+1)} = d \mathbf{M} \mathbf{o}^{(k)} + (1-d) \mathbf{1}
\end{equation}
where \( d \) is the damping factor and \( k \) denotes the \( k \)-th iteration. {{{In PageRank, node centrality depends on edge weights. Since FIRE defines edge weights via orthogonality, the resulting scores reflect each node’s overall independence.}}} Since the introduction of the damping factor aims to address the rank sinks problem, which is not present in our graph, we find it reasonable to set \( d = 1 \). Assuming the number of iterations is \( \alpha \), and normalizing the final result, our final formula becomes:
\begin{equation}
\label{eq:9}
\mathbf{o}_{\alpha} = \mathbf{M}^{\alpha} \mathbf{o}^{(0)}
\end{equation}
\begin{equation}
\label{eq:10}
\mathbf{o} = \frac{\mathbf{o}_{\alpha}}{\|\mathbf{o}_{\alpha}\|_2}
\end{equation}
where \( \|\cdot\|_2 \) denotes the Euclidean Norm. In this paper we set $\alpha=50$ since $\mathbf{o}_{\alpha}$ tends to stabilize after 50 iterations. The justification for employing Equation (\ref{eq:9}) and (\ref{eq:10}) to quantify the overall orthogonality is provided by Theorem 3 in Appendix \ref{sec:Proof of theorems}.\footnote{When there's a complete correlation among some Raters, the orthogonality drops to zero. In such a case, we don't proceed with the rater integration and consider these multiple raters as one.}

Final version of the integrated rating for data point $x$ can be expressed as:
\begin{equation}
I(x) = \mathbf{A}(x)^T (\mathbf{o} \odot \bm{\gamma})
\label{eq:11}
\end{equation}
where \(\mathbf{A}(x) = [A_1(x), A_2(x), \ldots, A_n(x)]^T\) is the vector of aligned ratings for data point \(x\) from all raters, \(\mathbf{o}\) is the overall orthogonality vector, \(\bm{\gamma} = [\gamma_1, \gamma_2, \ldots, \gamma_n]^T\) represents the intrinsic reliability vector, and \(\odot\) denotes the Hadamard product.

\begin{algorithm}[t]
\caption{\small Progressive Data Selection Scheme}  
\label{alg:progressive_data_selection}
\begin{algorithmic}[1]
\STATE \textbf{Input:} The entire dataset $\mathcal{D}_{t}$; Decay factor $\eta$; Initial number of parts $n$; Part multiplication factor $\beta$; Maximum number of parts $n_{\text{max}}$; Desired dataset size $k$
\STATE \textbf{Output:}  $\mathcal{D}_{s}$: Selected data subset

\STATE Calculate and sort integrated ratings for $\mathcal{D}_{t}$
\STATE Reduce the data to $\eta \%$ of its original size using decay factor $\eta$
\WHILE{size of $\mathcal{D}_{t}$ $\geq k$}
    \STATE Divide the data into $n$ parts based on the quantiles of the integrated ratings
    \FOR{each part $P_i$}
        \STATE Calculate the overall orthogonality in $P_i$
        \STATE Derive refined integrated ratings $\mathcal{S}_{P_i}$
    \ENDFOR
    \STATE Sort the data based on the new integrated ratings $\mathcal{S}_{P_i}$
    \STATE {Select the top $\eta \%$ data according to $\mathcal{S}_{P_i}$}
    \STATE $n \gets \min(n \times \beta, n_{\text{max}})$
\ENDWHILE
\STATE $\mathcal{D}_{s}$ $\gets$ \text{top k elements from} $\mathcal{D}_{t}$
\end{algorithmic}
\end{algorithm}

\begin{table*}[t]
    \centering
    \resizebox{\linewidth}{!}{
    \begin{tabular}{>{\arraybackslash}m{1.7cm} 
    |>{\arraybackslash}m{3.2cm} 
    |>{\centering\arraybackslash}m{1.3cm} 
    >{\centering\arraybackslash}m{1.3cm}
    >{\centering\arraybackslash}m{1cm} 
    >{\centering\arraybackslash}m{1.2cm}
    >{\centering\arraybackslash}m{1cm}
    >{\centering\arraybackslash}m{1.2cm}
    >{\centering\arraybackslash}m{1cm}
    >{\centering\arraybackslash}m{1cm}
    >{\centering\arraybackslash}m{1cm}
    }
    \toprule
        \multicolumn{2}{l|}{\textbf{Method}} & \textbf{ARC-E} & \textbf{ARC-C} & \textbf{SciQ} & \textbf{LogiQA} & \textbf{BoolQ} & \textbf{HellaSw.} & \textbf{PIQA} & \textbf{W.G.} & \textbf{AVG.} \\ 
    \midrule
        \multirow{5}{*}{Baseline} & Random & 48.2 & 22.3 & 84.5 & 19.7 & 60.8 & 32.1 & 63.5 & 49.2 & 47.5 \\
        ~ & DSIR with Book & 36.2 & 19.5 & 73.4 & 21.4 & 61.8 & 29.5 & 62.5 & \textbf{53.6} & 44.7 \\
        ~ & DSIR with Wiki & 37.2 & 18.0 & 76.4 & 23.0 & 58.0 & 27.9 & 57.3 & 51.1 & 43.6 \\
        ~ & Density & 47.2 & 20.0 & 81.7 & 20.3 & 61.5 & 31.4 & \textbf{66.3} & 51.4 & 47.5 \\ 
        ~ & ASK-LLM & 52.6 & 24.8 & 80.2 & 22.1 & \textbf{62.2}& 28.9 & 59.5 & 50.2 & 47.6 \\
    \midrule
        \multirow{4}{*}{\makecell[l]{Baseline\\(1 Rater)}} & QuRating (W.S.) & 47.5 & 21.4 & 81.8 & 21.3 & 61.3 & 31.3 & 62.7 & 52.5 & 47.5 \\ 
        ~ & QuRating (R.E.) & 50.6 & 23.2 & 83.9 & 22.6 & 61.4 & 30.2 & 59.8 & 49.8 & 47.7 \\ 
        ~ & QuRating (F.T.) & 54.1 & 23.0 & 83.5 & 22.0 & 60.9 & 30.4 & 59.5 & 51.7 & 48.1 \\ 
        ~ & QuRating (E.V.) & 50.1 & 21.6 & 84.4 & 20.9 & \textbf{62.2} & 31.9 & 61.2 & 48.8 & 47.6 \\
    \midrule
        \multirow{4}{*}{\makecell[l]{Baseline\\(Integration\\method, \\4 Raters)}} & Comp. Rater & 52.9 & 24.3 & 81.2 & 22.0 & 62.0 & 30.9 & 59.2 & 50.1 & 47.8 \\
        ~ & Max Criteria & 54.1 & 22.6 & 83.3 & 22.7 & 61.3 & 30.8 & 59.8 & 48.9 & 47.9 \\
        ~ & Average & 48.7 & 23.5 & 83.4 & 21.4 & 59.8 & 30.1 & 58.8 & 51.1 & 47.1 \\ 
        ~ & Mix Criteria\(^\dagger\) & 49.6 & 22.1 & 83.6 & 25.7 & 61.8 & 29.7 & 58.6 & 50.4 & 47.7 \\
    \midrule
        \multirow{5}{*}{\makecell[l]{Raters\\Integration}} & FIRE (2 Raters) & 55.4 & 24.9 & 83.6 & 21.3 & 60.0 & 31.6 & 60.1 & 50.4 & 48.4 \\
        ~ & FIRE (3 Raters) & 58.4 & 25.7 & 85.1 & \textbf{23.1} & 59.8 & 32.3 & 61.2 & 51.2 & 49.6 \\ 
        ~ & FIRE (4 Raters) & 59.1 & 26.4 & 86.0 & 21.0 & 61.8 & 32.9 & 59.7 & 52.8 & 50.0 \\
    \cmidrule{2-11}
        ~ & \makecell[l]{FIRE (4 Raters) Prg.} & \textbf{59.2} & \textbf{27.0} & \textbf{86.9} & 23.0 & 60.2 & \textbf{33.0} & 62.4 & 51.6 & \textbf{50.4} \\
    \bottomrule
    \end{tabular}
    }
    \caption{Downstream tasks results for different rating integration method. We report accuracy for each task, and the best performances are marked in bold. For rater integration, we report the average score of all the combinations. Detailed results can be found in the Appendix \ref{subsec:Results of different combinations}. Abbreviations: HellaSw. = HellaSwag, W.G. = WinoGrande, AVG. = Average, W.S. = Writing Style, R.E. = Required Expertise, F.T. = Facts and Trivia, E.V. = Educational Value, Prg = Progressive, Comp. = Comprehensive. \(\dagger\): We implement the method from QuRating\cite{wettig2024qurating}.}
    \label{main results1}
\end{table*}

\section{Progressive Data Selection via FIRE}
The most intuitive data selection method involves ranking the integrated ratings based on FIRE for the pretraining dataset, and then selecting the top $k$ highest-rated data points. Nonetheless, our analysis shows that after ranking $\mathcal{D}_{t}$ based on the integrated ratings, there is a change in the overall orthogonality $\mathbf{o}$ calculated from data subsets in different quantiles (Figure \ref{ortho}). The phenomenon arises because the top data better reflects the quality emphasized by the raters, while the tail data often contains more noise and low-quality information, leading to changed orthogonality among the raters. To refine the data selection process and mitigate the coarseness introduced by computing orthogonality on the entire dataset, we propose a progressive data selection scheme based on FIRE.

Specifically, as shown in Algorithm \ref{alg:progressive_data_selection}, we first calculate the integrated ratings for the data points in $\mathcal{D}_{t}$ based on FIRE, then sort the data points, but we don't select them right away based on these ratings. {It is reduced to $\eta\%$ of its original size by selecting the top $\eta\%$ of data based on integrated ratings.} Then, the data is partitioned into $n$ segments based on integrated ratings' quantiles. Orthogonality is computed within each segment to determine refined integrated ratings. After sorting the data according to new ratings, it's further reduced to $\eta \%$ of its initial size. The number of segments is increased by a factor of $\beta$, unless it reaches the maximum threshold $n_{\text{max}}$, in which case the data is divided into $n_{\text{max}}$ segments. The iterative process of calculating orthogonality within progressively smaller subsets continues before the subsequent reduction leaves less than $k$ data points for selection.

\section{Experiments}

\subsection{Experimental setup}

\paragraph{Setup} 
We use SlimPajama \cite{cerebras2023slimpajama} as the selection pool, with a total scale of 627B. And we employ the Llama \cite{touvron2023llama} tokenizer to divide the entire dataset into sequences of length 1024. During the data selection process, we select the top portion of data with the highest rating. For integrating different raters, we carry out experiments based on the four single raters of QuRating \cite{wettig2024qurating}: Writing Style, Facts and Trivia, Educational Value, and Required Expertise. For progressive data selection (FIRE Progressive), we set \(\eta=60, \beta=20\). For model training, we train a model with 1.3B parameters for 10,000 steps (equivalent to 20B tokens) and a 3B model for 200B tokens, with bfloat16 format during both training and testing. 

\paragraph{Evaluation} 
We utilize lm-evaluation-harness \cite{gao2021framework} to assess the models' performance across eight downstream tasks: ARC-E \cite{clark2018think}, ARC-C \cite{clark2018think}, SciQ \cite{welbl2017crowdsourcing}, LogiQA \cite{liu2020logiqa}, BoolQ \cite{clark2019boolq}, HellaSwag \cite{zellers2019hellaswag}, PIQA \cite{bisk2020piqa}, and WinoGrande \cite{sakaguchi2021winogrande}. 
We employ in-context learning for the evaluation, selecting enough examples to fill the window length of 1024 tokens for each task. Standard accuracy metrics are reported for all tasks.

\paragraph{Baselines} 
In addition to comparing FIRE with four single raters from Qurating, we also compare it with the following methods: (1) \textit{Random}: randomly selecting data from the original training corpus. (2) \textit{DSIR} \cite{xie2023data}: utilizing importance sampling for data selection, and we chose Wikipedia and Books as target domains. (3) \textit{Density} \cite{sachdeva2024train}: using KDE to estimate data density in the training corpus and employing inverse sampling. (4) \textit{ASK-LLM} \cite{sachdeva2024train}: using a comprehensive prompt to label high-quality data, and train a T5-based classifier.

Furthermore, we compare several basic rating integration methods: (1) \textit{Comprehensive Rater}: integrating multiple single rater criteria into a single prompt to obtain annotations from GPT-4o, then training a comprehensive quality rater. (2) \textit{Max Criteria}: aligning ratings and selecting the highest value in each dimension as the final rating. (3) \textit{Average}: arithmetic mean integration of normalized ratings from each rater. (4) \textit{Mix Criteria}: we follow QuRating\cite{wettig2024qurating} to merge and deduplicate the top data selected by each single rater, followed by random sampling. These four methods are applied to the integration of four raters. More details can be found in Appendix \ref{subsec:Integration Baselines}.


\subsection{Main Results}

Table \ref{main results1} show our main results.   We find that:

\paragraph{FIRE is superior to other integration methods.} FIRE demonstrates greater effectiveness than existing integration methods, {while introducing minimal additional computational cost (see Appendix \ref{appendix:flops} for a detailed analysis of computational cost).} As shown in Table \ref{main results1}, \textit{Comprehensive Rater} with a multi-dimension prompt results in an average score that is even lower than the single-dimension rater \textit{Facts and Trivia}. This suggests that GPT-4o still falls short in assessing data quality from a broad perspective. Both \textit{Mix Criteria} and \textit{Max Criteria} are inferior to FIRE, indicating that a comprehensive evaluation of FIRE is more beneficial. \textit{Average} simply calculates the mean of all ratings and the experimental results of FIRE demonstrate an improvement over \textit{Average}. {To demonstrate the robustness of FIRE, we conduct additional experiments integrating other raters. Detailed results are provided in Appendix \ref{subsec:ior}.}

\paragraph{FIRE outperforms the single raters and other data selection methods.} Comparing FIRE to the individual raters, it demonstrates significant improvements, with an average score increase of up to 1.9\% over the best single rater and 2.9\% over random selection. {From a data efficiency perspective, FIRE achieves comparable performance using less than 37.5\% of the data required by the \textit{Random} baseline.} Additionally, our method outperforms \textit{QuRating} and other data selection methods, validating the high quality of the data selected by FIRE.

\paragraph{Adding more raters can lead to better performance.} We observe that as the number of integrated raters increases, the overall effect gradually improves. The average score of FIRE with three raters surpasses that of the integration of two raters, and further increases when four raters are integrated. This indicates that our rater integration method is scalable: incorporating a broader range of raters not only provides a more comprehensive evaluation of the samples but also allows for a better understanding of the importance of each metric.


\paragraph{Progressive selection further improves FIRE} After integrating progressive selection, we observe a notable improvement in the model's performance on downstream tasks. Compared to FIRE (4 Raters), the average score of FIRE (4 Raters) Prg. increases by 0.4\%. The most significant improvement is seen in PIQA, with an absolute score boost of 2.7\%. These results validate the effectiveness of the progressive selection method in choosing high-quality data.

\subsection{Analysis}



\begin{figure}[t]
    \centering
    \includegraphics[width=\linewidth]{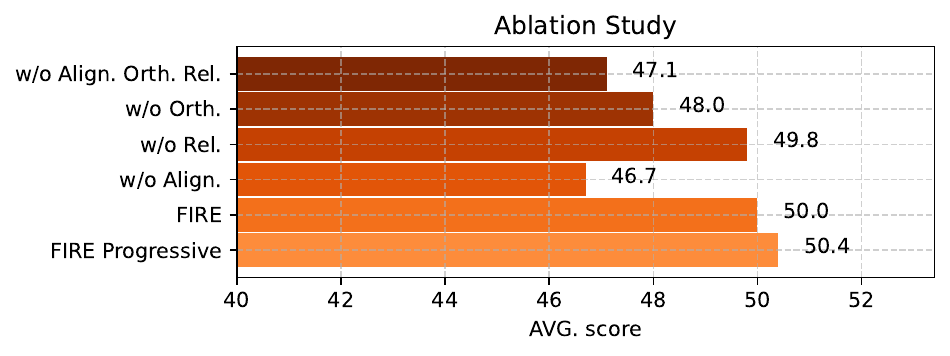}
    \caption{Ablation experiments on the impact of different rating integration strategies in FIRE.}
    \label{ablation study}
\end{figure}

\paragraph{Ablation Study} 
We integrate four single raters, and subsequently remove \textit{Rating Alignment (Align.)}, \textit{Intrinsic Reliability (Rel.)}, and \textit{Orthogonality (Orth.)}, as well as remove all (directly averaging on the rating post-normalization), in order to investigate the impact of each component in FIRE.





From Figure \ref{ablation study}, we can find that: (1) Rating alignment is a crucial step. We note that without rating alignment, the score drops by 
3.3\%, even falling below the direct average. As we previously detailed, the alignment allows for better comparability between the ratings from different raters, making their integration more reasonable.
(2) Both orthogonality and intrinsic reliability can further enhance model performance, with the impact of orthogonality being relatively significant (a drop of 2\% w/o Orth.), while the improvement from Intrinsic Reliability is rather subtle (a drop of 0.2\% w/o Rel.)
(3) The combination of all components yields the best results. This implies that these methods of integrating the ratings are complementary. By superimposing both methods, we can achieve a rating that more accurately reflects the actual contribution of the sample to the pretraining.

\paragraph{Training Efficiency}

\begin{figure}[t]
\centering


\begin{subfigure}[b]{0.23\textwidth}
    \includegraphics[width=\textwidth]{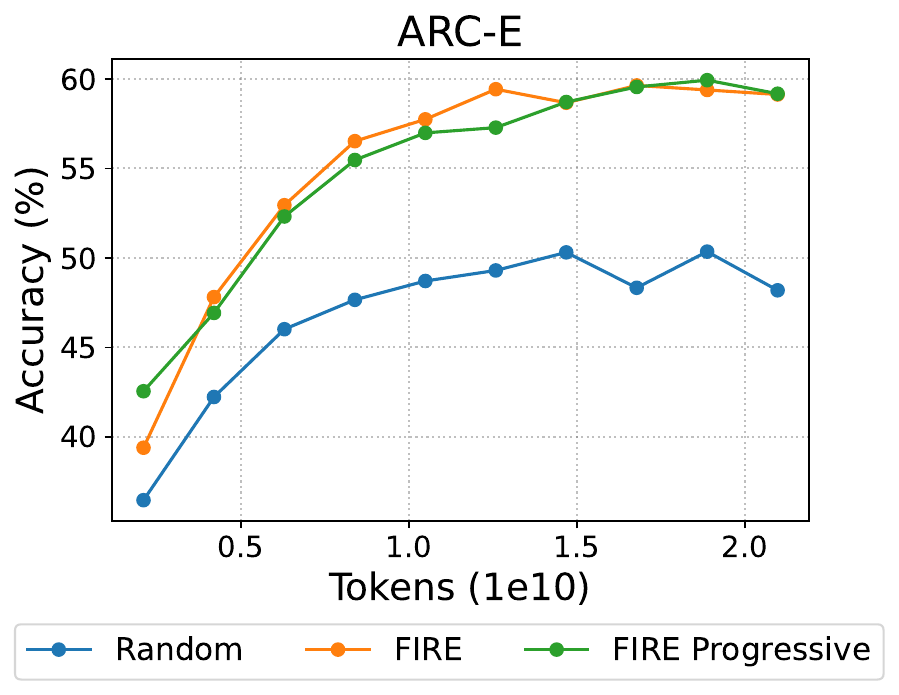}
    \subcaption{ARC-E}
\end{subfigure}
\begin{subfigure}[b]{0.23\textwidth}
    \includegraphics[width=\textwidth]{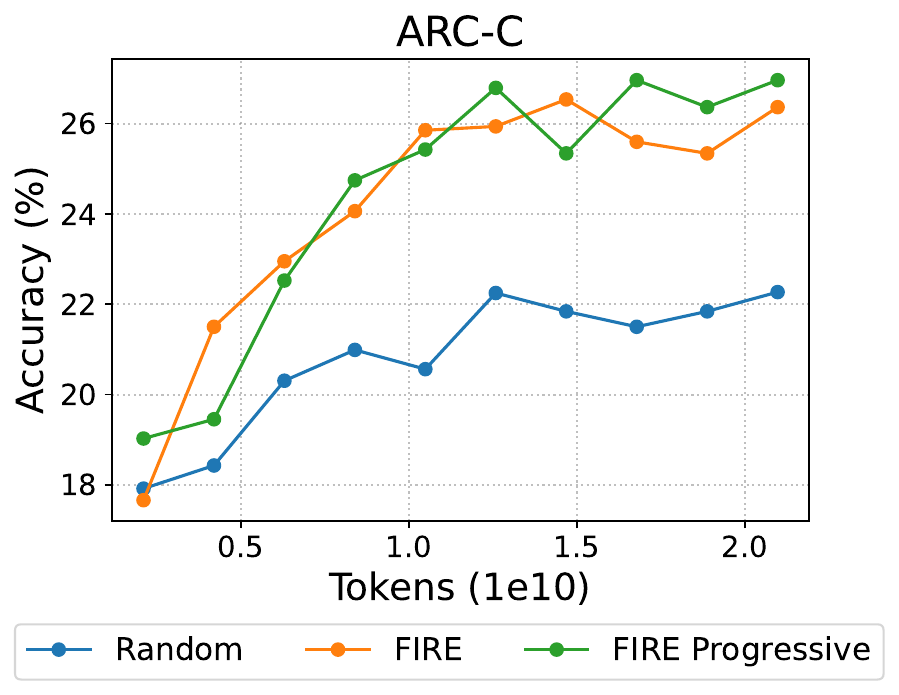}
    \subcaption{ARC-C}
\end{subfigure}
\begin{subfigure}[b]{0.23\textwidth}
    \includegraphics[width=\textwidth]{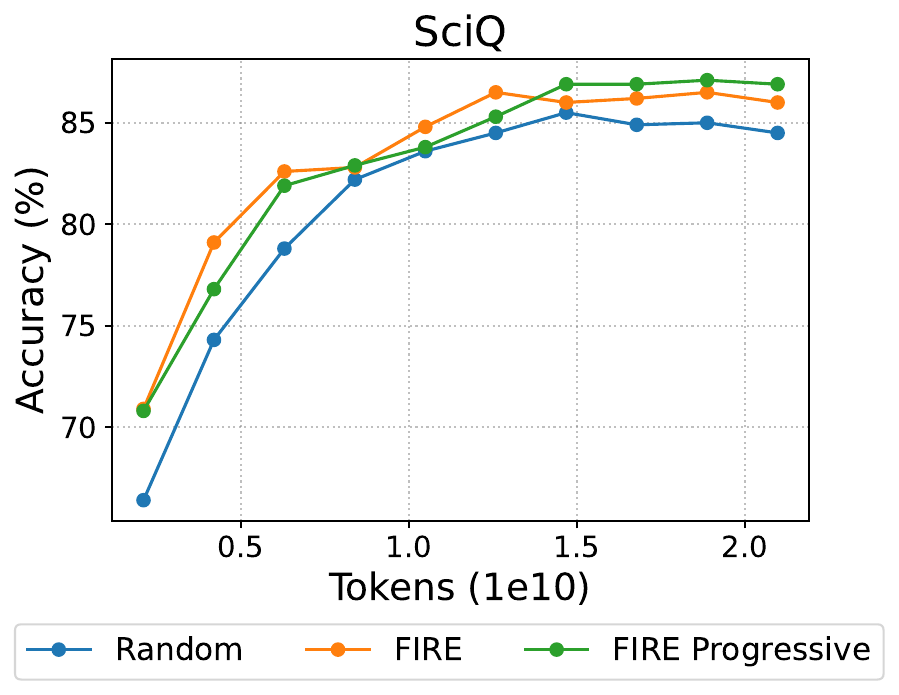}
    \subcaption{SciQ}
\end{subfigure}
\begin{subfigure}[b]{0.23\textwidth}
    \includegraphics[width=\textwidth]{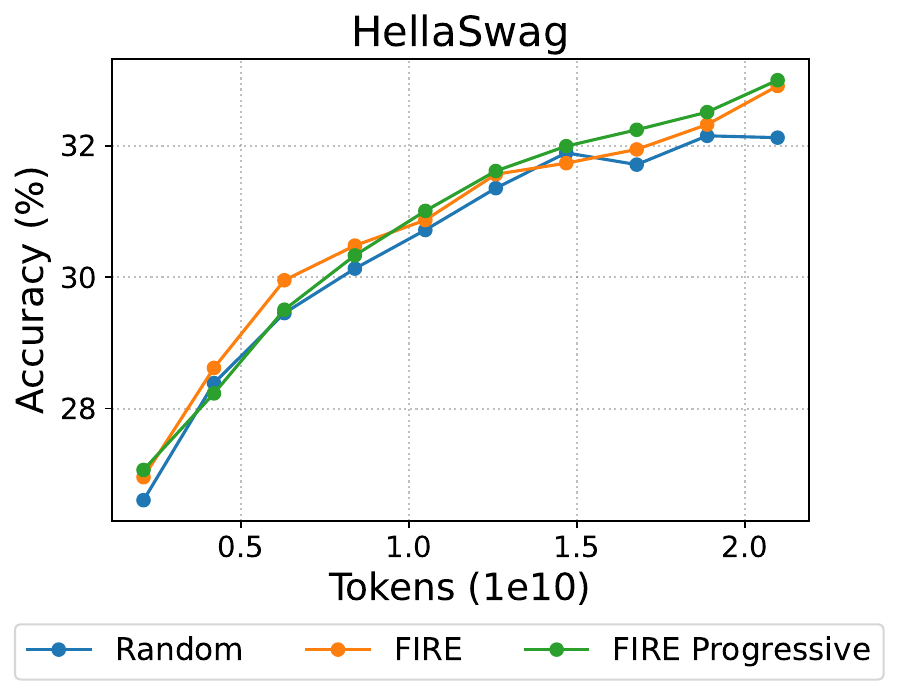}
    \subcaption{HellaSwag}
\end{subfigure}

\caption{The in-context learning results with respect to pretraining tokens on four downstream tasks: ARC-E, ARC-C, SciQ, and HellaSwag.}
\label{fig:score_steps}
\end{figure}

Figures \ref{fig:avg_score_steps} and \ref{fig:score_steps} show how the model's performance on downstream tasks evolves with the pretraining tokens. In terms of average score, our method outperforms the random baseline by 2.9\%. From the training efficiency perspective, our method reduces training tokens to achieve a certain performance level by more than half.
In addition, our method shows a significant advantage in the ARC-E/C and SciQ tasks, consistently scoring higher than the random baseline. However, on the HellaSwag task, our method's performance is similar to the random baseline and does not consistently surpass it. One possible explanation is that HellaSwag is an especially challenging dataset, which makes it difficult to discern performance differences on the 1.3B scale model.

\paragraph{Larger datasets and models}

\begin{table}[t]
    \centering
    \scalebox{0.8}{
    \begin{tabular}{l|c|ccc}
    \toprule
        Method & Pretraining FLOPS & ARC-C & HellaSwag & AVG. \\ 
    \midrule
        \multicolumn{5}{l}{Model Size = 1.3B} \\
    \midrule
        Random & $32.2 \times 10^{19}$ & 23.6 & \textbf{34.4} & 48.6 \\ 
        FIRE & $16.1 \times 10^{19}$ & \textbf{26.4} & 32.9 & \textbf{50.0} \\ 
    \midrule
        \multicolumn{5}{l}{Model Size = 3B} \\
    \midrule
        Random & $377.5 \times 10^{19}$ & 26.8 & 49.4 & 54.1 \\ 
        FIRE & $377.5 \times 10^{19}$ & \textbf{28.8} & \textbf{51.7} & \textbf{55.7} \\ 
    \bottomrule
    \end{tabular}
    }
    \caption{Experimental results on larger models and datasets.}
    \label{3b_model}
\end{table}

To validate our method on larger models and datasets, we conduct several additional experiments: (1) Training a 1.3B parameter model for 40B tokens using randomly sampled data; (2) Training a 3B parameter model for 200B tokens for both random sampling and FIRE (four raters integration). As illustrated in Table \ref{3b_model}, the results indicate that the FIRE method outperforms \textit{Random} with fewer training FLOPS in the 1.3B parameter model setting. Furthermore, in the 3B parameter model setting, FIRE exceeds \textit{Random} by an average of 1.6\%, demonstrating the robustness and scalability of our method with larger models and training datasets.

\paragraph{Ablation Study for Progressive Selection}
\begin{figure}[t]
    \centering
    \scalebox{0.8}{
    \includegraphics[width=0.9\linewidth]{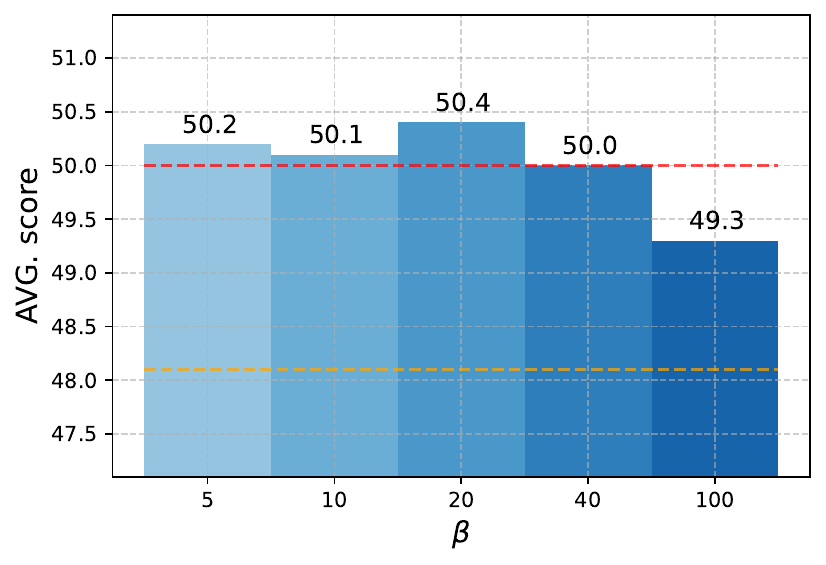}
    }
    \caption{The impact of the partition multiplier factor $\beta$ on FIRE Progressive performance. The red and orange dashed lines respectively represent the scores of FIRE and Random.}
    \label{fig:beta ablation}
\end{figure}

We conduct an ablation study on the partition multiplier factor $\beta$ for the FIRE Progressive approach, with the outcomes shown in Figure \ref{fig:beta ablation}. The results show that for a majority of $\beta$ values, FIRE Progressive scores surpass those of FIRE and Random, suggesting that the progressive selection method contributes to a consistent enhancement of the FIRE framework.

\paragraph{Case study}





\begin{figure}[t]
\centering


\begin{subfigure}[b]{0.24\textwidth}
    \includegraphics[width=\textwidth]{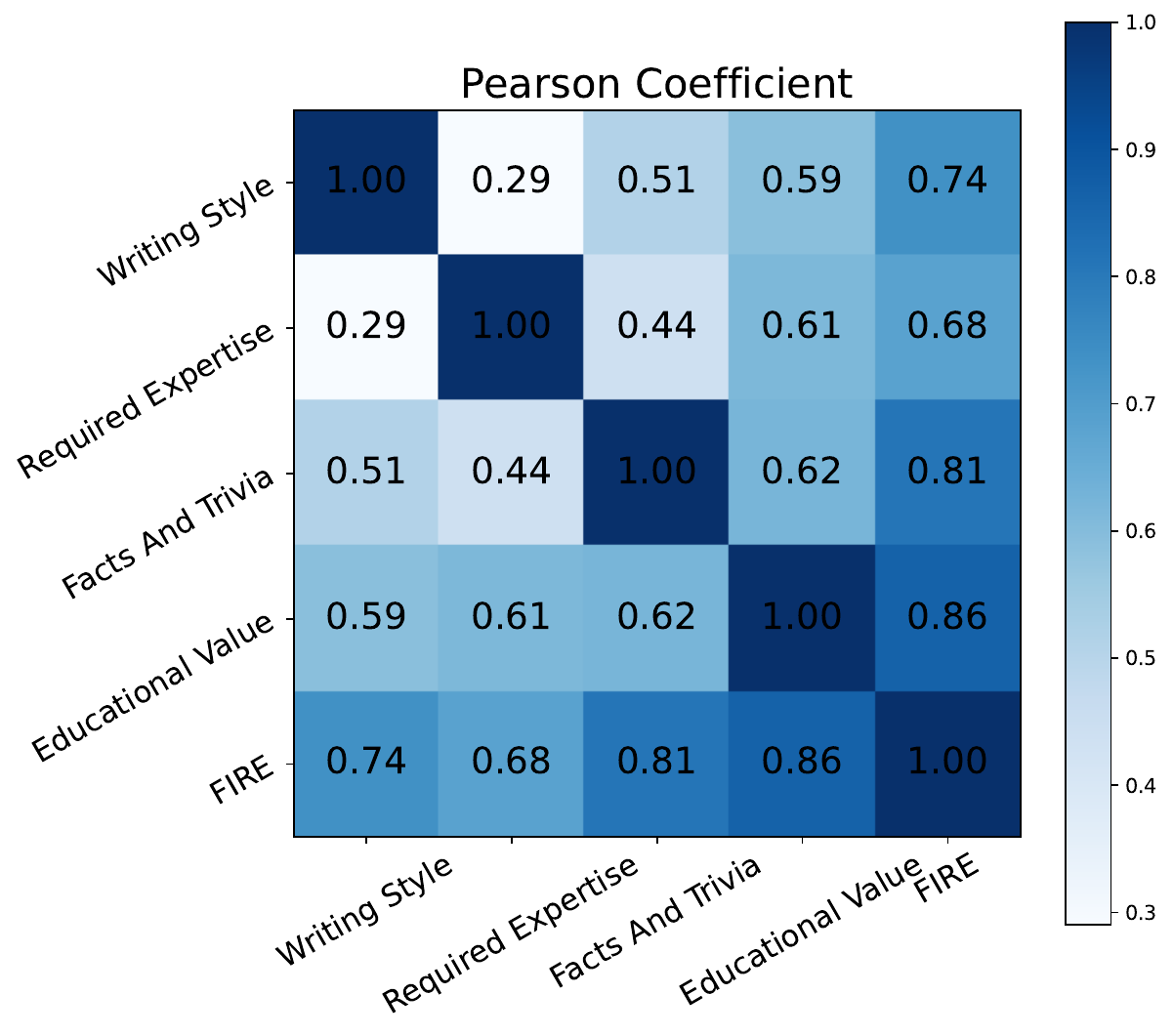}
    \subcaption{}
    \label{fig:rater_corr}
\end{subfigure}
\begin{subfigure}[b]{0.23\textwidth}
    \includegraphics[width=\textwidth]{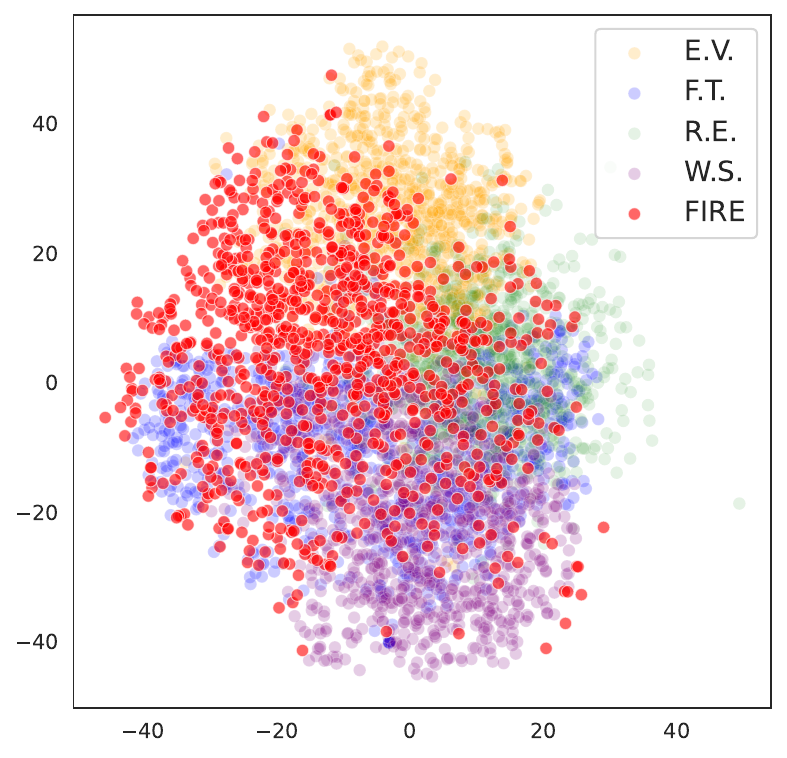}
    \subcaption{}
    \label{fig:diversity}
\end{subfigure}

\caption{(a) The Pearson correlation between different raters. (b) Illustration of top samples rated by each method. We use Sentence-T5 to encode texts, and employ t-SNE to perform dimensionality reduction.}
\label{fig:rater_corr and diversity}
\end{figure}



\begin{table}[t]
    \centering
    \scalebox{0.85}{
    \begin{tabular}{l|ccccc}
    \toprule
        Rater/Dimension & W.S. & R.E. & F.T. & E.V. & Comp. \\ 
    \midrule
        W.S. & \textbf{94.0} & 53.5 & 62.9 & 25.6 & 84.1 \\ 
        R.E. & 39.0 & 85.8 & 93.8 & \textbf{97.1} & 98.0 \\ 
        F.T. & 46.0 & \textbf{97.3} & 88.7 & 42.2 & 98.3 \\ 
        E.V. & 50.8 & 76.3 & \underline{96.5} & 47.1 & 99.1 \\ 
        FIRE & \underline{58.2} & \underline{90.5} & \textbf{97.3} & \underline{54.1} & \textbf{99.5} \\ 
    \bottomrule
    \end{tabular}
    }
    \caption{The percentage of high-quality data across various dimensions, for top data selected by each rater. Underlined scores indicate the second highest. Comp. = Comprehensive.}
    \label{fig:multi-dim}
\end{table}

We extract 1M samples from the corpus and compute the pairwise Pearson correlation of the ratings across all dimensions. As illustrated in Figure \ref{fig:rater_corr}, the FIRE rating exhibits a strong correlation with all other ratings, confirming the effectiveness of the FIRE framework in consolidating ratings across multiple dimensions. By employing the FIRE framework, we can effectively select data that exhibits high ratings across all dimensions.

Moreover, we analyze the data properties that FIRE focuses on compared to other raters. Based on the four dimensions in QuRating and a comprehensive dimension, we determine the percentage of high-quality data selected by each rater in each dimension. We pick 1000 data points at random from the top 20B data that are selected by each method. Then we use GPT-4o to assess each dimension six times, taking the average as the final evaluation result. Refer to the Appendix \ref{subsec:Multi-dimension analysis prompt} for more details. As shown in Table \ref{fig:multi-dim}: (1) From a comprehensive perspective, FIRE shows the best results, as evidenced by our performance in downstream tasks. (2) In terms of each dimension, FIRE consistently achieves relatively high accuracy, demonstrating that the data selected by this method maintains high quality across all dimensions.

To assess whether FIRE selects more diverse samples than single raters, we extract the top 1000 texts from each method and encode them using Sentence-T5 \cite{ni2022sentence}. We then apply t-SNE \cite{van2008visualizing} for dimensionality reduction and visualize the results in Figure~\ref{fig:diversity}. In the latent semantic space, samples selected by single raters appear more clustered, while those selected by FIRE are more broadly distributed, indicating higher diversity. Combined with prior case studies showing high quality, FIRE demonstrates a strong balance between diversity and quality.

\section{Related Works}
When pretraining language models, a large amount of text corpus is often crawled from the internet. However, several studies \cite{li2023textbooks, zhou2024lima,Duan_Zhang_Wang_Que_Liu_Rong_Cai_2025} suggest that high-quality data is more beneficial to the model's performance. To select high-quality data, a common strategy involves utilizing rules crafted by experts \cite{raffel2020exploring, rae2021scaling, laurenccon2022bigscience, together2023redpajama, penedo2024fineweb} and removing duplicate sentences \cite{lee2022deduplicating, sorscher2022beyond, abbas2023semdedup, cerebras2023slimpajama, tirumala2024d4}. However, they often fall short in effectively selecting high-quality data based on semantic content. An alternative approach involves utilizing a target data source or proxy model \cite{wenzek2020ccnet, xie2023data, marion2023less, thakkar2023self, engstrom2024dsdm, yu2024mates}.

Training a classifier is a more straightforward method \cite{du2022glam, gururangan2022whose, zhang2024autonomous, wettig2024qurating, sachdeva2024train}. \citeauthor{du2022glam} (\citeyear{du2022glam}) implemented a logistic regression binary classifier to score the data, while some studies train more complex scorers 
 \cite{zhang2024autonomous, sachdeva2024train}. Additionally, QuRating \cite{wettig2024qurating} trains multiple raters with a finer-grained approach to analyze the contribution of data to model performance improvement from different dimensions. Other studies\cite{zhang2025preferencecurriculumllmspretrained,zhang2025frameboostingllmsfourquadrant} explore methods to boost pretraining efficiency by curriculum learning.
 
Prior methods mainly select data from a single perspective. Although QuRating introduces multidimensional raters, it does not systematically address their integration—a challenge our work explicitly tackles.

\section{Conclusion}
We propose FIRE, a flexible and scalable framework that integrates multiple data quality raters for comprehensive, multi-dimensional data assessment. First, ratings from different dimensions are aligned into a unified space. Then, orthogonality is introduced to adjust rater weights. To handle orthogonality variations across rating ranks, we adopt a progressive approach for fine-grained data selection. Experiments on the SlimPajama dataset show that FIRE outperforms other selection methods, substantially improving pretrained model performance across diverse downstream tasks.

\section{Limitations and Future Works}

\paragraph{Linear assumption for orthogonality integration} We hypothesize our integration on a linear additive relation. While this assumption simplifies the computations, it might limit our ability to capture complex interactions between different dimensions. Future research could explore incorporating non-linear systems to adjust rater weights, potentially boosting performance.

\paragraph{Number of raters} We've tested our method with four different raters and the results have been promising. To make our integration method more robust and reliable, future tests could include more raters from diverse dimensions, which would ultimately help us build a more resilient and versatile rating integration system.


\bibliography{main}
\newpage
\appendix

\section{Ethical Considerations}
Training large language models (LLMs) demands a substantial amount of electrical power, resulting in significant carbon emissions. To address this issue, we aim to develop efficient data selection methods that reduce the computational resources required for model training, thereby mitigating environmental impact. Furthermore, by meticulously curating high-quality data, we can enhance model performance and minimize the occurrence of hallucinations. This not only improves the reliability of the models but also helps curb the spread of fake news and misinformation, addressing critical societal concerns.

\section{Proof of theorems}
\label{sec:Proof of theorems}
In this section, we present the proofs for the two previously mentioned theorems. Theorem 2 facilitates the transformation of orthogonality calculations into a centrality problem within a graph. Meanwhile, Theorem 3 rigorously demonstrates the convergence of our framework. Specifically, it establishes that after several iterations, the vector $\frac{\mathbf{o}}{||\mathbf{o}||_2}$ will assuredly converge to a fixed vector, thus precluding divergence.

\newtheorem*{repTheorem}{Theorem 2}

\begin{repTheorem}
The overall orthogonality $o_i$ of a rater \( R_i \) with other raters can be quantified as \textbf{weighted degree centrality} of the corresponding vertex \( V_i \) in the orthogonality graph.
\end{repTheorem}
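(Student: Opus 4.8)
The plan is to unwind the definitions on both sides and check they coincide. On the FIRE side, Equation (\ref{eq:5}) defines the overall orthogonality of rater $R_i$ as $o_i = \sum_{k \neq i} O(i,k)$, i.e. the sum of the orthogonality values between $R_i$ and every other rater. On the graph side, I would recall the standard definition of \emph{weighted degree centrality} of a vertex in a weighted graph: the sum of the weights of all edges incident to that vertex. I would then invoke Definition \ref{def:1}, which says the orthogonality graph is the complete graph on vertices $V_1, \ldots, V_n$ (one per rater) with edge weight $O(i,j)$ on the edge $\{V_i, V_j\}$. Since the graph is complete, vertex $V_i$ is incident to exactly the edges $\{V_i, V_k\}$ for all $k \neq i$, so its weighted degree is $\sum_{k \neq i} O(i,k)$, which is precisely $o_i$.

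Concretely, the steps in order would be: (1) state the definition of weighted degree centrality $\mathrm{wdeg}(V_i) = \sum_{\{V_i,V_k\}\in E} w(\{V_i,V_k\})$; (2) substitute the edge set and weights from Definition \ref{def:1}, using completeness to identify $E$ with all unordered pairs, so the sum ranges over all $k \neq i$ and $w(\{V_i,V_k\}) = O(i,k)$; (3) conclude $\mathrm{wdeg}(V_i) = \sum_{k\neq i} O(i,k) = o_i$ by comparison with Equation (\ref{eq:5}). A small technical remark I would include is that $O(i,k)$ should be symmetric in its arguments (orthogonality of $R_i$ with $R_k$ equals that of $R_k$ with $R_i$), which is needed for the undirected graph in Definition \ref{def:1} to be well-defined; this presumably follows from the formalization of $O$ in Appendix \ref{subsec:Formalization of Orthogonality}, and I would cite that.

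Honestly, there is no real obstacle here — this theorem is essentially a restatement that packages the algebraic expression $o_i = \sum_{k\neq i} O(i,k)$ in graph-theoretic language so that the subsequent PageRank-style iteration (Equations (\ref{eq:9})--(\ref{eq:10})) has a clean interpretation. The only thing that requires any care is making sure the conventions match: that the graph has no self-loops (consistent with the $k \neq i$ exclusion), that edge weights are the raw orthogonalities rather than any normalized version, and that "degree centrality" is taken in its weighted form rather than the unweighted count of neighbors. If I wanted to make the statement do more work, I would also note that stacking these identities over all $i$ gives the matrix form $\mathbf{o}^{(0)} = \mathbf{M}\mathbf{1}$ stated just after the theorem, since $(\mathbf{M}\mathbf{1})_i = \sum_k \mathbf{M}_{ik} = \sum_{k\neq i} O(i,k)$ when $\mathbf{M}$ has zero diagonal — but that is a corollary rather than part of the theorem proper.
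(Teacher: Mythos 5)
Your proposal is correct and matches the paper's own proof: both simply unwind the definition of weighted degree centrality, use completeness of the orthogonality graph so the incident edges are exactly $\{V_i,V_k\}$ for $k\neq i$, and identify the resulting sum $\sum_{k\neq i} O(i,k)$ with $o_i$ from Equation (\ref{eq:5}). Your added remarks on symmetry of $O$ and the matrix form $\mathbf{o}^{(0)}=\mathbf{M}\mathbf{1}$ are fine but not needed beyond what the paper does.
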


\begin{proof}
Consider that the weighted degree centrality of vertex \( V_i \) is the sum of the weights of the edges connecting \( V_i \) to all vertices in the set of adjacent vertices \( A_i \). Since the graph of orthogonality is a complete graph, we have
\[
    C(V_i) = \sum_{V_j \in A_i} O(i,j) = \sum_{\substack{j=1 \\ j \ne i}}^{n} O(i,j)
\]

This is consistent with the definition of the overall orthogonality in Equation (4) of the main text.
\end{proof}

\newtheorem*{reprepTheorem}{Theorem 3}

\begin{reprepTheorem}
As \(\alpha \to +\infty\), \(\mathbf{o}\) will eventually converge to a fixed unit vector.
\end{reprepTheorem}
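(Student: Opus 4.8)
The plan is to read Equations (\ref{eq:9})--(\ref{eq:10}) as a normalized power iteration, $\mathbf{o} = \mathbf{M}^{\alpha}\mathbf{o}^{(0)} / \|\mathbf{M}^{\alpha}\mathbf{o}^{(0)}\|_2$, and to deduce its limit from the classical convergence of power iteration to the dominant eigenvector. First I would record the structural facts about $\mathbf{M}$ that make this work: as the (weighted) adjacency matrix of the orthogonality graph it is real, symmetric, entrywise non-negative, and has zero diagonal; and by the footnote's convention of collapsing completely correlated raters into one, every off-diagonal weight $O(i,j)$ is strictly positive, so $\mathbf{M}$ is irreducible. Symmetry then lets me invoke the spectral theorem to write $\mathbf{M} = \sum_{i=1}^{n} \lambda_i \mathbf{u}_i \mathbf{u}_i^{T}$ with an orthonormal eigenbasis $\{\mathbf{u}_i\}$ and eigenvalues $\lambda_1 \ge \lambda_2 \ge \cdots \ge \lambda_n$.

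Next I would control the top of the spectrum. Since $\mathbf{M}$ is non-negative and irreducible, the Perron--Frobenius theorem gives that $\lambda_1 = \rho(\mathbf{M}) > 0$ is a simple eigenvalue admitting a strictly positive eigenvector $\mathbf{u}_1$. The extra ingredient I need is strict dominance in absolute value, $\lambda_1 > |\lambda_j|$ for all $j \ge 2$; this is the statement that $\mathbf{M}$ is primitive, and for $n \ge 3$ I would obtain it by noting that $\mathbf{M}^2$ is entrywise strictly positive --- $(\mathbf{M}^2)_{ij} = \sum_k O(i,k)\,O(k,j) > 0$, where for an off-diagonal entry one picks any index $k \notin \{i,j\}$ --- so a power of $\mathbf{M}$ is positive and $\mathbf{M}$ is primitive. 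The degenerate case $n = 2$ I would settle by hand: there $\mathbf{o}^{(0)} = \mathbf{M}\mathbf{1} = O(1,2)\,\mathbf{1}$ is already an eigenvector of $\mathbf{M}$, so $\mathbf{o}_{\alpha}$ stays on that ray and the normalized vector is constant in $\alpha$. With strict dominance established, I expand $\mathbf{o}^{(0)} = \sum_{i} c_i \mathbf{u}_i$ and observe $c_1 = \langle \mathbf{o}^{(0)}, \mathbf{u}_1 \rangle > 0$, because $\mathbf{o}^{(0)} = \mathbf{M}\mathbf{1}$ has all coordinates $\sum_{j} O(i,j) > 0$ and $\mathbf{u}_1$ is strictly positive. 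Then
\[
\frac{\mathbf{o}_{\alpha}}{\lambda_1^{\alpha}} \;=\; c_1 \mathbf{u}_1 + \sum_{j=2}^{n} c_j \left(\frac{\lambda_j}{\lambda_1}\right)^{\alpha} \mathbf{u}_j \;\longrightarrow\; c_1 \mathbf{u}_1 \qquad (\alpha \to \infty),
\]
since $|\lambda_j/\lambda_1| < 1$ for $j \ge 2$; the convergence is geometric at rate $|\lambda_2/\lambda_1|$, which explains why $\alpha = 50$ already suffices in practice. Normalizing and using $c_1 > 0$ (so there is no sign ambiguity) yields $\mathbf{o} = \mathbf{o}_{\alpha}/\|\mathbf{o}_{\alpha}\|_2 \to \mathbf{u}_1/\|\mathbf{u}_1\|_2 = \mathbf{u}_1$, a fixed unit vector, which is exactly the claim.

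The main obstacle is the strict magnitude dominance $\lambda_1 > |\lambda_j|$: power iteration genuinely fails to converge --- it oscillates between $\pm$ directions --- precisely when $-\lambda_1$ is also an eigenvalue, which is the bipartite case for an adjacency matrix. So the heart of the argument is that the orthogonality graph, being a positively weighted complete graph, is non-bipartite for $n \ge 3$ (it contains triangles), equivalently that $\mathbf{M}$ is primitive; the computation $\mathbf{M}^2 > 0$ is the clean certificate, with $n = 2$ handled separately as above. Everything else --- the spectral decomposition, the Perron--Frobenius facts, and the limit computation --- is routine once these structural observations are in place; the one small point worth making explicit is that the initial vector is not orthogonal to the Perron direction, i.e.\ $c_1 \neq 0$, which is immediate here since both $\mathbf{o}^{(0)}$ and $\mathbf{u}_1$ are strictly positive.
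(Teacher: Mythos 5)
Your proposal is correct and follows the same overall route as the paper's proof (spectral theorem for the symmetric adjacency matrix, Perron--Frobenius, then a power-iteration limit after expanding \(\mathbf{o}^{(0)}\) in the eigenbasis), but it is strictly more careful at the two points where the paper's argument actually has gaps. First, the paper simply asserts \(|\lambda_i/\lambda_1|<1\) for every non-leading eigenvalue, which does not follow from non-negativity and irreducibility alone: an irreducible non-negative matrix can have \(-\lambda_1\) in its spectrum (the bipartite/imprimitive case), and indeed for \(n=2\) the matrix \(\mathbf{M}\) here has spectrum \(\{\pm O(1,2)\}\), so the paper's claimed strict inequality is false as stated in that case. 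Your certificate of primitivity via \(\mathbf{M}^2>0\) for \(n\ge 3\) (using the strictly positive off-diagonal entries and zero diagonal), together with the separate \(n=2\) computation showing \(\mathbf{o}^{(0)}=O(1,2)\mathbf{1}\) is already a Perron eigenvector, supplies exactly the missing justification. Second, the paper never verifies that \(\mathbf{o}^{(0)}\) has a non-zero component in the leading eigenspace; if it were orthogonal to the Perron direction the normalized iterates would not converge to the claimed limit. Your observation that \(c_1=\langle\mathbf{o}^{(0)},\mathbf{u}_1\rangle>0\), since \(\mathbf{M}\mathbf{1}\) and the Perron vector are both entrywise positive, closes that gap and additionally pins down the sign of the limit, which the paper leaves implicit. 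In short: same decomposition and limit computation, but your version proves the two hypotheses the paper takes for granted.
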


\begin{proof}
Since the Graph of orthogonality is an undirected graph, the adjacency matrix \(\mathbf{M}\) is a symmetric matrix. According to the Spectral Theorem for Symmetric Matrices, \(\mathbf{M}\) can be diagonalized, and all corresponding eigenvectors can form an orthogonal basis.

Since for any \(i, j\), \(O(i, j) \geq 0\), we can deduce that \(\mathbf{M}\) is a non-negative matrix. Additionally, since the Orthogonality Graph is a complete graph, \(\mathbf{M}\) is an irreducible matrix. By the Perron-Frobenius Theorem, we obtain that:

\[
\begin{aligned}
& \exists \lambda \in \mathbb{R}, \lambda > 0 \\
\text{s.t.} \ & \lambda = \max{\{\mu \mid \mu \in \sigma(\mathbf{M})\}},
\end{aligned}
\]
where $\sigma(\mathbf{M})$ denotes the set of eigenvalues of $\mathbf{M}$.
Assume \(\mathbf{M}\) has \(m\) eigenvalues \(\lambda_1, \lambda_2, ..., \lambda_m\) arranged in descending order, where \(\lambda_1>0\). Each eigenvalue \(\lambda_i\) corresponds to the eigenvectors \(\mathbf{v}_{i1}, ..., \mathbf{v}_{ip_i}\), where \(p_i\) is the algebraic multiplicity of \(\lambda_i\).
We can decompose \(\mathbf{o}^{(0)}\) into each eigenvector
\[
    \mathbf{o}^{(0)} = \sum^{m}_{i=1}{\sum^{p_i}_{j=1}{c_{ij}\mathbf{v}_{ij}}}
\]

Therefore, we have
\[
\begin{aligned}
    \mathbf{M}^{\alpha}\mathbf{o}^{(0)} &= \mathbf{M}^{\alpha}\sum^{m}_{i=1}{\sum^{p_i}_{j=1}{c_{ij}\mathbf{v}_{ij}}} \\
&=\sum^{m}_{i=1}{\sum^{p_i}_{j=1}{c_{ij}\mathbf{M}^{\alpha}\mathbf{v}_{ij}}} \\
&=\sum^{m}_{i=1}{\sum^{p_i}_{j=1}{c_{ij}{\lambda_i}^{\alpha}\mathbf{v}_{ij}}} \\
&={\lambda_1}^{\alpha}\sum^{m}_{i=1}{\sum^{p_i}_{j=1}{c_{ij}(\frac{\lambda_i}{\lambda_1})^{\alpha}\mathbf{v}_{ij}}} \\
\end{aligned}
\]

Given that \(\forall i \neq 1\), \(|\frac{\lambda_i}{\lambda_1}| < 1\), thus 
\[
\lim_{\alpha \to +\infty}{\left(\frac{\lambda_i}{\lambda_1}\right)^{\alpha}} = 0
\]

We obtain
\[
\mathbf{o} = \lim_{\alpha \to +\infty}{\mathbf{M}^{\alpha}\mathbf{o}^{(0)}} = {\lambda_1}^{\alpha}\sum^{p_1}_{j=1}{c_{1j}\mathbf{v}_{1j}} \\
\]

Thus
\[
\begin{aligned}
    \frac{\mathbf{o}}{||\mathbf{o}||_2} &= \frac{{\lambda_1}^{\alpha}\sum^{p_1}_{j=1}{c_{1j}\mathbf{v}_{1j}}}{||{\lambda_1}^{\alpha}\sum^{p_1}_{j=1}{c_{1j}\mathbf{v}_{1j}}||_2} \\
    &= \frac{\sum^{p_1}_{j=1}{c_{1j}\mathbf{v}_{1j}}}{||\sum^{p_1}_{j=1}{c_{1j}\mathbf{v}_{1j}}||_2}
\end{aligned}
\]
The right-hand side of the formula is a fixed unit vector. Therefore, the theorem is proven.
\end{proof}

\section{FIRE Analysis}
\subsection{Analysis of the necessity of rating alignment}
\label{subsec:Analysis of the necessity of rating alignment}
Multiple raters may employ different scales and criteria for assessing data quality, which can cause substantial problems if their ratings are integrated without appropriate standardization. For example, some raters may prioritize grammatical accuracy using a numerical scale, while others might assess semantic relevance using a percentage scale. Moreover, the rating thresholds distinguishing data that positively contribute to pretraining can significantly differ among raters. Without rating alignment, the integrated ratings can be misleading, inaccurately reflecting the actual quality of the data point. The subsequent examples and analyses underscore the importance of rating alignment for a reasonable data quality evaluation:
\begin{itemize}
    \item Different Scales. Suppose we have Rater A, who assesses data quality on a 1 to 10 scale based on grammatical accuracy, and Rater B, who evaluates semantic relevance on a 0\% to 100\% scale. Let's say a particular data point receives an 8 from Rater A and 85\% from Rater B. If we naively average these ratings, we get: \( \frac{8 + 85}{2} = 46.5 \). This score does not genuinely reflect the data quality as the scales used are inherently different. Hence, it is crucial to standardize the ratings onto a common scale to facilitate meaningful comparisons.
    \item Different Quality Thresholds. Even with ratings standardized to a common scale, we face the problem of varying quality thresholds distinguishing data that positively contribute to pretraining. For example, Rater A may deem ratings above 5 as high-quality, whereas Rater B may view ratings above 80\% as high-quality. Suppose we standardize both ratings to a 0-1 scale, turning an 8 from Rater A into 0.8 and 85\% from Rater B into 0.85. Despite this standardization, direct comparison of the two raters' scores remains impractical due to their differing threshold values for differentiating data quality.
\end{itemize}

\subsection{Prompt for GPT-4o to compare data quality}
\label{subsec:Prompt for GPT-4o to compare data quality}
\begin{tcolorbox}[colback=white!95!gray,colframe=gray!50!black,rounded corners,label={prompt}, title={Prompt for GPT-4o to compare data quality}]
\begin{lstlisting}[breaklines=true, xleftmargin=0pt, breakindent=0pt, columns=fullflexible, mathescape, numbers=none]
Compare two text excerpts and choose the text which contain more informative signal for pretraining a large-language model.

An informative datapoint should be well-formatted, contain some usable knowledge of the world, and strictly NOT have any harmful, racist, sexist, etc. content. Aspects that should NOT influence your judgement:
1. The length of the text
2. The order in which the texts are presented

Note that the texts are cut off, so you have to infer their contexts. The texts might have similar quality, but you should still make a relative judgement and choose the label of the preferred text. 

[Option A]
... {text a} ...
[Option B]
... {text b} ...

Now you have to choose between either A or B. Respond only with a single word.
\end{lstlisting}
\end{tcolorbox}

\subsection{Reliability Analysis of GPT-4o}
\label{subsec:raog}
{QuRating\cite{wettig2024qurating} points out that GPT is more effective at comparing the relative quality between two data samples than performing absolute quality evaluation. To further assess the reliability of GPT-4o, we use QuRating (Educational Value) as a case study and conduct a win-rate evaluation involving human experts. Specifically, we compare the win rates assigned by human annotators and GPT-4o across different rating percentiles, with results presented in descending order of percentile rating in Table~\ref{tab:gpt4o_reliability}. The two sets of win rates exhibit a Pearson correlation of 0.99, indicating strong agreement and suggesting that GPT-4o does not introduce significant bias in the annotation process. This high consistency supports the reliability of using GPT-4o for quality assessment. Furthermore, GPT-4o is used only to estimate win rates between rater-selected samples and random subsets, rather than to assign absolute scores, which further reduces bias.}
\begin{table*}[htbp]
\centering
\begin{tabular}{lcccccccccc}
\toprule
Percentile & 10\% & 20\% & 30\% & 40\% & 50\% & 60\% & 70\% & 80\% & 90\% & 100\% \\
\midrule
GPT-4o       & 0.773 & 0.705 & 0.625 & 0.600 & 0.545 & 0.513 & 0.480 & 0.425 & 0.340 & 0.273 \\
Human Expert & 0.797 & 0.698 & 0.657 & 0.616 & 0.533 & 0.501 & 0.469 & 0.433 & 0.317 & 0.300 \\
\bottomrule
\end{tabular}
\caption{Win-rate comparison between GPT-4o and human experts for QuRating (Educational Value) across different rating percentiles.}
\label{tab:gpt4o_reliability}
\end{table*}

\subsection{Polynomial spline interpolation for win-rate-percentile function}
\label{subsec:Polynomial spline interpolation for win-rate-percentile function}
To obtain the aligned rating for each data point, we consider the win rate \( w_{ij} \) of rater \( i \) in the \( j \)-th rating interval as the rating of the midpoint of that interval, denoted by coordinates (\( p_{j} \), \( w_{ij} \)). We then complete the rater's win-rate-percentile function using polynomial spline interpolation to derive a continuous and smooth win-rate-percentile function. The polynomial spline interpolation function \( S(p) \) is defined as follows:

\begin{equation}
\begin{aligned}
S(p) =& a_k (p - p_k)^n + b_k (p - p_k)^{n-1} + \cdots  \\ &+ y_k (p - p_k)^2 + z_k (p - p_k) + d_k, \\
&\qquad \qquad \qquad \qquad p_k \leq p < p_{k+1}
\end{aligned}
\end{equation}
where \( p \) denotes the percentile, \( p_k \) and \( p_{k+1} \) are the boundaries of the \( k \)-th interval, \( n \) is the degree of the polynomial, and \( a_k \), \( b_k \), \(\cdots\), \( y_k \), \( z_k \), and \( d_k \) are the coefficients determined through the spline interpolation process. 

\subsection{Ratings distribution illustration}
\label{subsec:Ratings distribution illustration}
Figure \ref{fitting} shows the win rates of samples in different percentile intervals and the fitted Rating-Percentile curves for 4 raters. 
The original ratings provided by the four raters exhibit significant differences, as illustrated in Figure 4 of QuRating\cite{wettig2024qurating}. The alignment introduces a random subset for comparison, which makes the ratings from different raters comparable, mapping the ratings into a similar range. This forms the foundation for the subsequent weighted integration of the raters, which explains the poor performance without alignment. However, even after alignment, there are still noticeable differences in the rating distributions of different raters. For instance, in Figure \ref{fitting:WS} and \ref{fitting:FT}, there are significant differences in the win rate of the first quartile and the middle section of the curve.

\begin{figure}[htbp]
    \centering
    \begin{subfigure}[b]{0.49\linewidth}
        \centering
        \includegraphics[width=\linewidth]{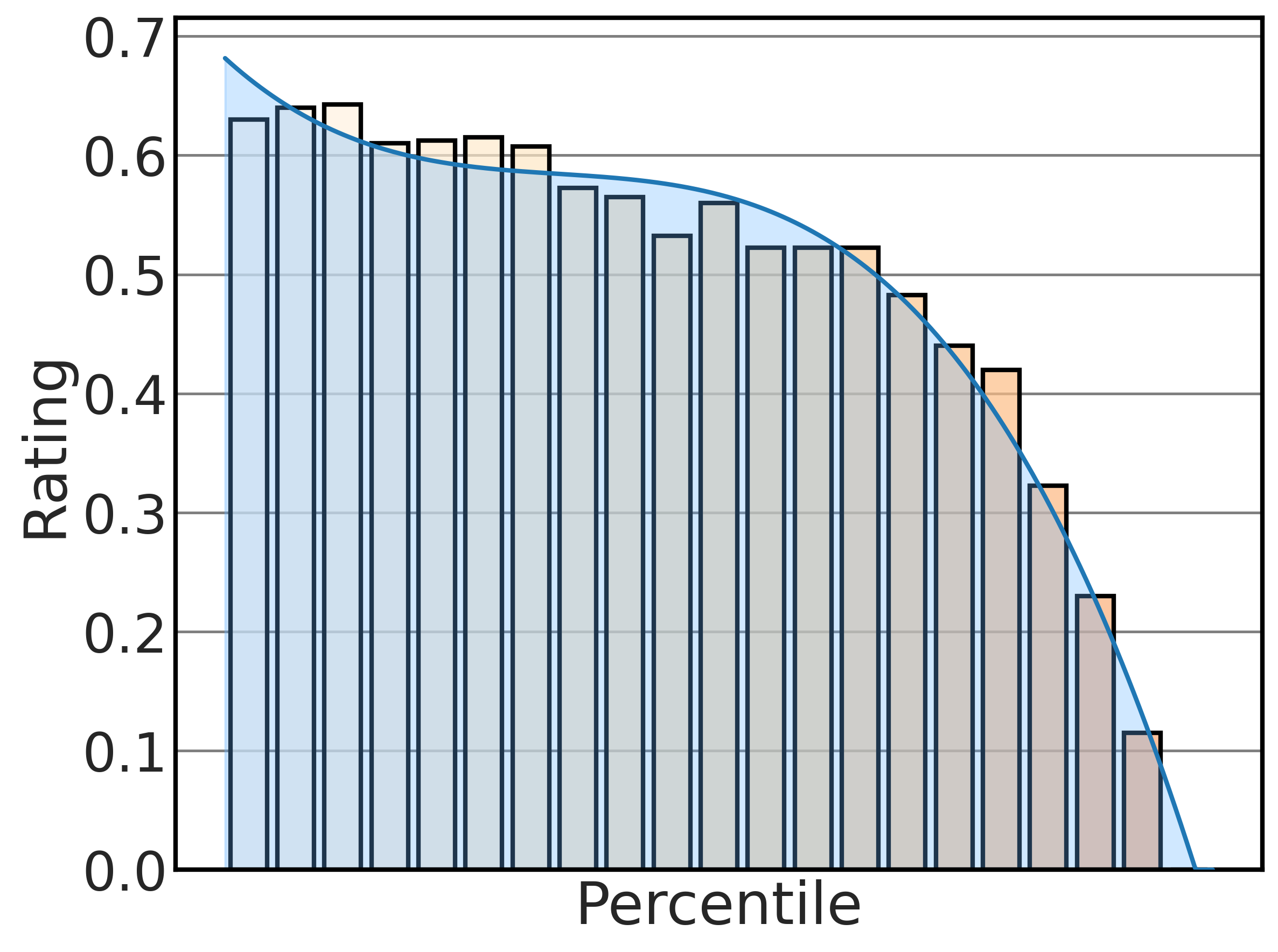}
        \caption{Writing Style}
        \label{fitting:WS}
    \end{subfigure}
    \hfill
    \begin{subfigure}[b]{0.49\linewidth}
        \centering
        \includegraphics[width=\linewidth]{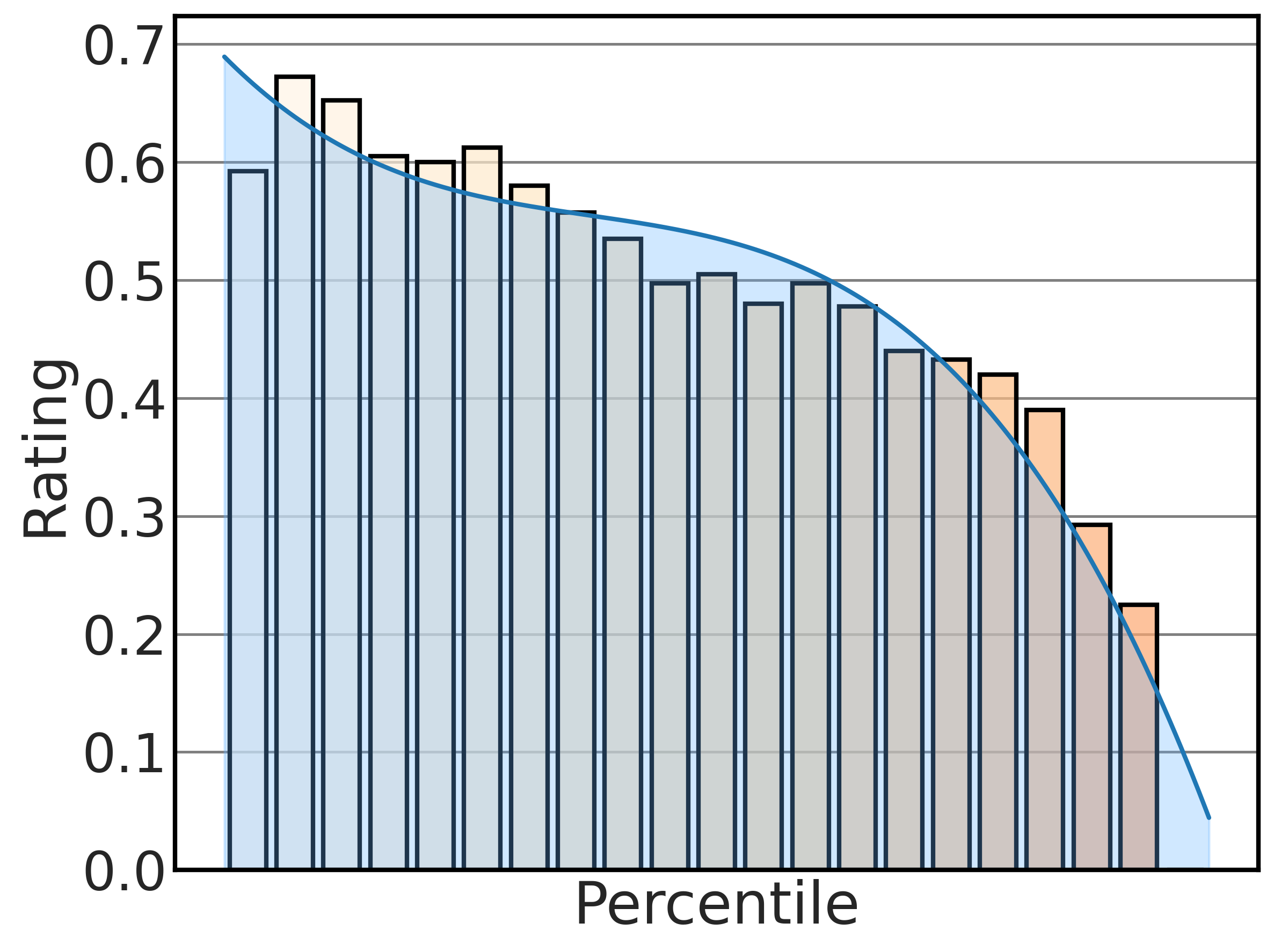}
        \caption{Required Expertise}
        \label{fitting:RE}
    \end{subfigure}
    \vfill
    \begin{subfigure}[b]{0.49\linewidth}
        \centering
        \includegraphics[width=\linewidth]{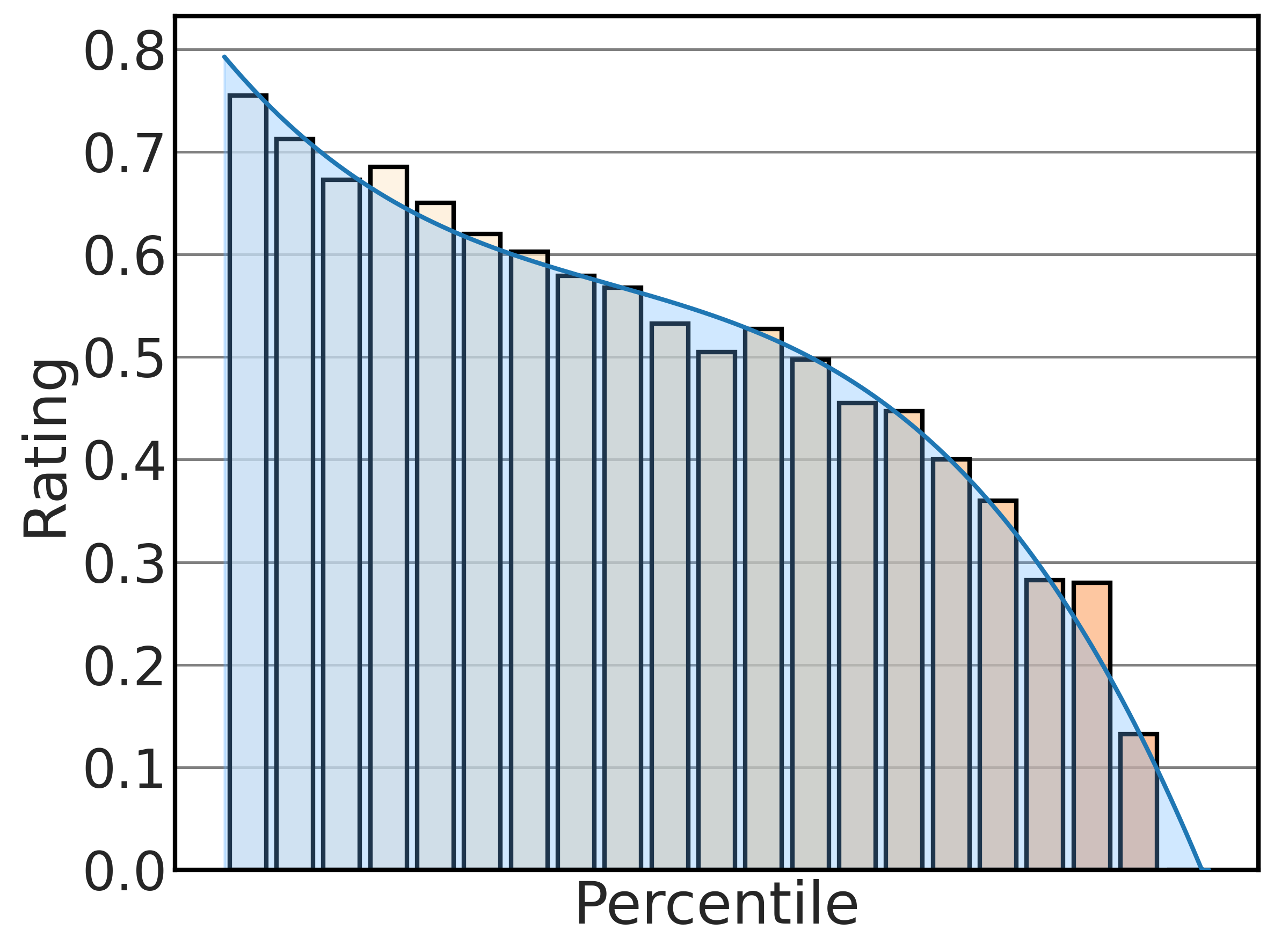}
        \caption{Facts and Trivia}
        \label{fitting:FT}
    \end{subfigure}
    \hfill
    \begin{subfigure}[b]{0.49\linewidth}
        \centering
        \includegraphics[width=\linewidth]{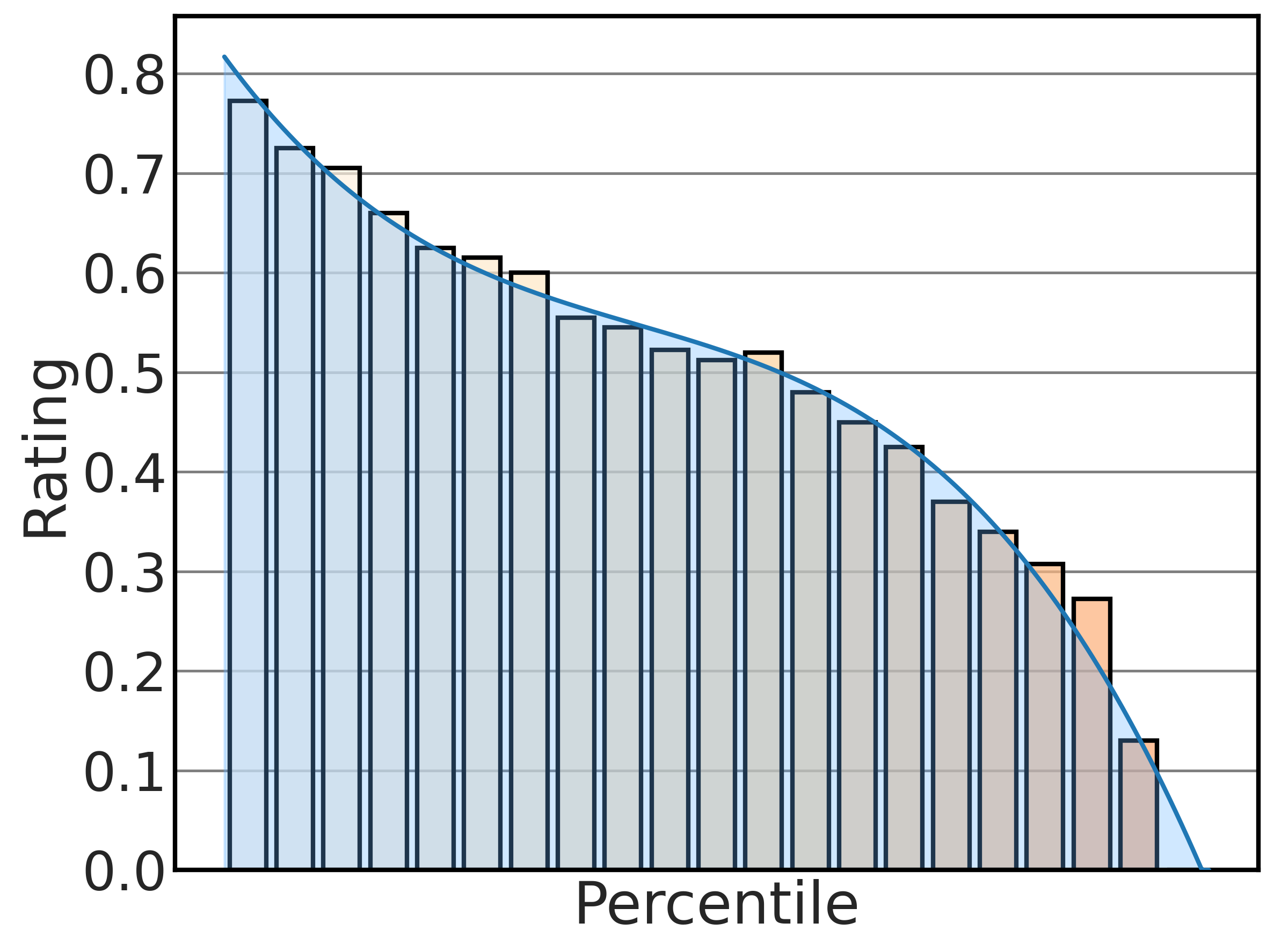}
        \caption{Educational Value}
        \label{fitting:EV}
    \end{subfigure}
    \caption{The win rates of samples in different percentile intervals and the fitted Rating-Percentile curves for 4 raters. }
    \label{fitting}
\end{figure}


\subsection{Formalization of Orthogonality.}
\label{subsec:Formalization of Orthogonality}
\paragraph{Empirical determination of the boundary conditions.}
For convenience and without losing rationality, we use the initial version of the integrated rating calculation formula (without PageRank optimization) to determine the boundary values of orthogonality. The integrated rating of data point \(i\) can be expressed as
\[
I(i) = \sum_{j=1}^{n} \gamma_j o_j A_j(i) 
\]
where \( o_j = \sum_{\substack{k=1 \\ k \neq j}}^{n} O(j, k) \) is a quantification of the overall orthogonality \( R_j \) with other raters. We ignore the impact of raters' reliability and set \(\gamma_j = 1\). By substituting the definition of the overall orthogonality, we obtain
\begin{equation}
\begin{aligned}
    I(i) &= \sum_{j=1}^{n} \left(\sum_{\substack{k=1 \\ k \neq j}}^{n} O(j, k) \right) A_j(i) \\
         &= \frac{1}{2} \sum_{j=1}^{n} \sum_{\substack{k=1 \\ k \neq j}}^{n} O(j, k) \left(A_j\left(i\right) + A_k\left(i\right)\right)
\end{aligned}
\label{eq:6} 
\end{equation}
The final expression, after rearrangement, can be seen as pairwise addition of raters, with their integrated ratings weighted by the orthogonality of the two raters. When two raters are perfectly correlated, it means their ratings are identical across all data points. In this scenario, the information provided by one rater is entirely redundant with respect to the other. Therefore, the orthogonality between these raters should be set to 0, indicating no additional information is gained by considering both ratings. Conversely, when two raters are perfectly orthogonal, it signifies that their ratings are completely independent of each other. Each rater offers unique and complementary perspectives, leading to a comprehensive evaluation when combined. In such cases, the orthogonality should be set to 0.5, reflecting that each rater contributes equally distinct information to the overall rating.

\paragraph{Orthogonality function.}
The most intuitive approach to determine the orthogonality between two raters is to use the correlation coefficient between their rating distributions on a pretraining dataset. The orthogonality-correlation function needs to satisfy two key conditions:

\begin{itemize}
    \item \textbf{Monotonicity:} The stronger the correlation, the lower the orthogonality.
    \item \textbf{Boundary Conditions:} Empirically, when two raters are completely uncorrelated, the orthogonality between them is 0.5; when they are fully correlated, the orthogonality is 0. And the empirical value has been described above.
\end{itemize}


In fact, we choose the orthogonality between two raters to be 0 when they are completely correlated, as we want to avoid duplication issues during rater integration. For instance, consider three raters, where Rater 1 and Rater 2 are completely correlated (correlation coefficient of 1) and are completely uncorrelated with Rater 3 (correlation coefficient of 0). Considering only one iteration, the integration result of the three raters is \( R = O_{12}(R_1 + R_2) + O_{13}(R_1 + R_3) + O_{23}(R_2 + R_3) = (O_l + O_h)R_1 + (O_l + O_h)R_2 + 2O_hR_3 \), where \( O_l = O_{12} \), \( O_h = O_{13} = O_{23} \). Given that Rater 1 and Rater 2 are completely correlated, which means \( R_1 = R_2 \), we obtain \( R = 2(O_l + O_h)R_1 + 2O_hR_3 \). If \( O_l \neq 0 \), \( R_1 \) will be assigned an additional weight. However, one would naturally assume that $R_1$ and $R_3$ should carry the same weight when they are completely uncorrelated.

Considering the scenario where the correlation coefficients among all raters are zero, the calculation of orthogonality would completely degenerate to zero. Therefore, in such extreme cases, we treat all perfectly correlated raters as a single rater and do not proceed with orthogonality-related integration.

Based on the conditions, we explored three functional forms that satisfy the criteria: a linear function, a Gaussian function, and a symmetrically processed Gaussian function. The three functional forms explored for orthogonality are as follows:

(1) \textbf{Linear Function:}
\begin{equation}
O_L(i, j) = \frac{1}{2} \cdot (1 - |r(i,j)|)
\end{equation}

(2) \textbf{Gaussian Function:}
\begin{equation}
O_G(i, j) = \exp\left(-\frac{r(i, j)^2}{2c^2}\right) - \frac{1}{2}
\end{equation}

(3) \textbf{Symmetrically Processed Gaussian Function:}
\begin{equation}
O_{S.G}(i, j) = \left(\frac{3}{2} - |r(i, j)|\right) - \exp\left(-\frac{r(i, j)^2}{2c^2}\right)
\label{linear}
\end{equation}

where \( r(i, j) \) represents the correlation measure between raters \( i \) and \( j \). Specifically, it is quantified using the Pearson correlation coefficient between the score distributions of the two raters over a common dataset. In both Gaussian-based functions, the constant \( c = \sqrt{\frac{1}{2 \ln 2}} \) is determined by the boundary conditions. Figure \ref{orth} illustrates the 3 forms of orthogonality functions with respect to correlation coefficient. 

It's worth noting that the experiments in Appendix \ref{appendix:Orthogonality Function} demonstrate that the symmetrically processed gaussian function exhibits the best performance. Therefore, in the main experiments of this paper, we use the symmetrically processed gaussian function, i.e., Equation (\ref{linear}). Additionally, when integrating the ratings from two raters, their weights based on orthogonality are the same. Therefore, we only use the intrinsic reliability of the raters to weight their ratings.
\begin{figure}[t]
\centering
\includegraphics[width=0.85\linewidth]{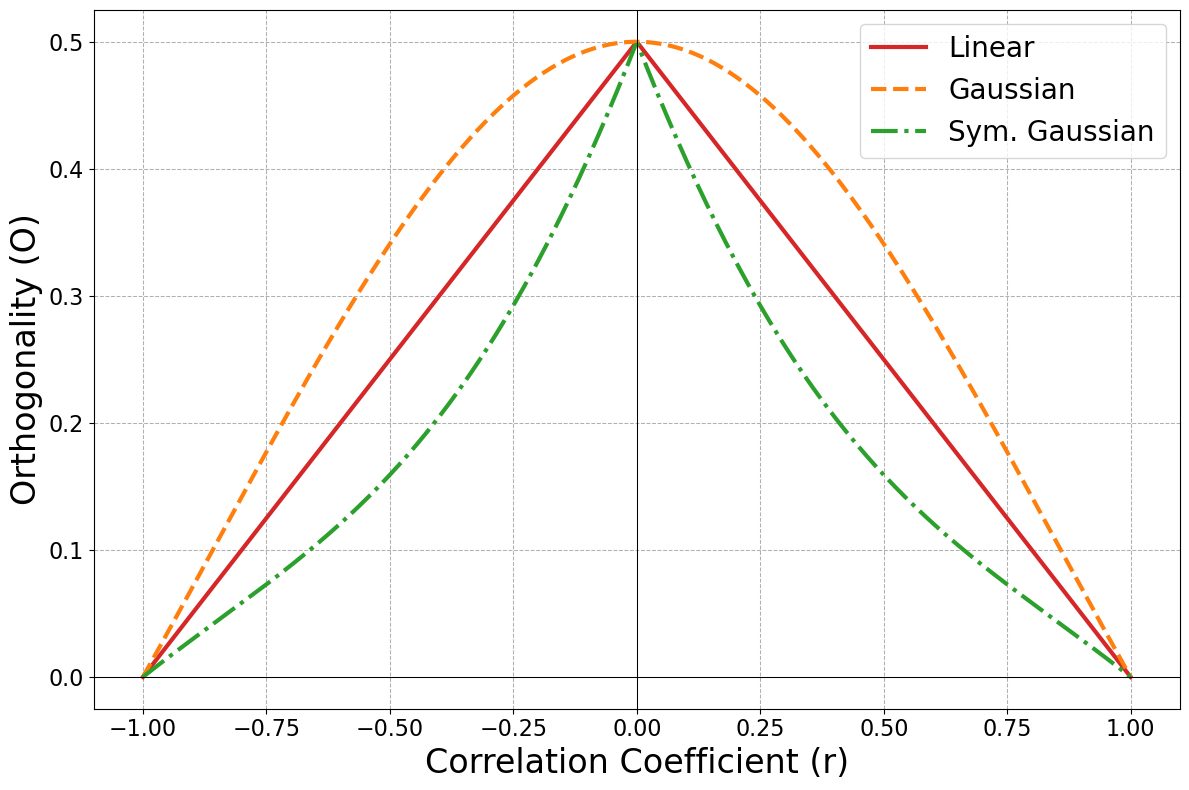} 
\caption{Trends of 3 forms of orthogonality functions with the change in the correlation coefficient.}
\label{orth}
\end{figure}

\begin{table}[t]
    \centering
    \renewcommand\arraystretch{1.5}
    \scalebox{0.8}{
    \begin{tabular}{lr|lr}
        \toprule
            \textbf{Hyperparameter} & \textbf{Value} & \textbf{Hyperparameter} & \textbf{Value} \\
        \midrule
             Attention heads & 16 & Precision & bfloat16 \\
             Layers & 24 & Vocab size & 32,000 \\
             Hidden size & 2048 & Window length & 1024 \\
             Intermediate size & 5504 & Tied embedding & False\\
             Position embedding & ROPE & Activation & SwiGLU \\
        \bottomrule
    \end{tabular}
    }
    \caption{The hyperparameters of model structure.}
    \label{model_structure}
\end{table}

\subsection{Orthogonality-Percentile curve}
\begin{figure}[t]
\centering
\includegraphics[width=0.9\linewidth]{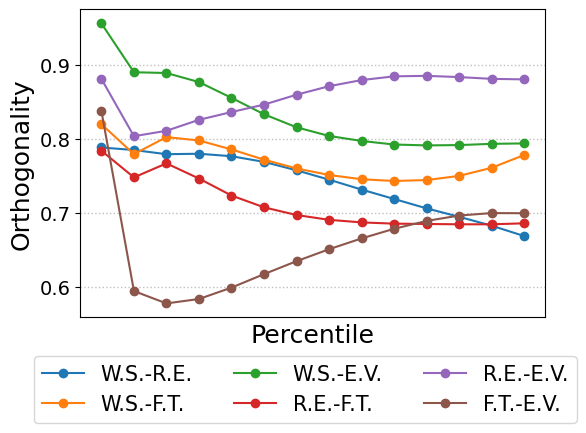} 
\caption{Trends of orthogonality functions with the change in percentile of different pairs of raters.}
\label{ortho}
\end{figure}
Figure \ref{ortho} shows the orthogonality calculated based on data subsets from different quantiles. It is evident that the orthogonality calculated from data in different quantiles varies.

\section{Experimental Details}
\subsection{Model and training} 
\label{subsec:Model and training}
We train a model with 1.3B parameters similar to the Llama architecture, the model structure is detailed in the Table \ref{model_structure}. We train the language model from scratch and randomly initialize the model parameters. We set the batch size to 2048 and the learning rate to 5e-4, using a cosine learning schedule. To accelerate the training and inference processes, we use the bfloat16 format during both training and testing. The training is based on the Megatron framework and utilizes flash attention. The entire model is trained on 16 A100 GPUs for a total of 10,000 steps. For FIRE orthogonality calculation, we use the symmetrically processed gaussian function.

\subsection{Integration Baselines}
\label{subsec:Integration Baselines}
We compare our rating integration method with other baseline methods. Here we give more details about the baselines.

\begin{tcolorbox}[colback=white!95!gray,colframe=gray!50!black,rounded corners,label={prompt-dot}, title={Prompts for Comprehensive Rater}]
\begin{lstlisting}[breaklines=true, xleftmargin=0pt, breakindent=0pt, columns=fullflexible, mathescape, numbers=none]
Compare two text excerpts and choose the text which
1. has a more polished and beautiful writing style.
2. contains more facts and trivia. Prefer specific facts and obscure trivia over more common knowledge.
3. requires greater expertise and prerequisite knowledge to understand it.
4. has more educational value, e.g., it includes clear explanations, step-by-step reasoning, or questions and answers.
Aspects that should NOT influence your judgement: 
1. Which language the text is written in 
2. The length of the text 
3. The order in which the texts are presented 

Note that the texts are cut off, so you have to infer their contexts. The texts might have similar quality, but you should still make a relative judgement and choose the label of the preferred text. 

[Option A] 
{text1} 
[Option B] 
{text2} 

Now you have to choose between either A or B. Respond only with a single word.
\end{lstlisting}
\end{tcolorbox}



\paragraph{Comprehensive Rater} 
Following QuRating \cite{wettig2024qurating}, we collect pairwise comparison data and train a reward model based on Sheared-Llama 1.3B \cite{xiasheared}. We merge all the evaluation criteria into a single prompt to guide GPT-4o comparison. Above is an illustration of the prompt.

\paragraph{Mix Criteria}

We utilize each rater to select the top 20B tokens, which are then merged and duplicates removed. Following this, we randomly pick out another 20B tokens from this merged set. As each sample only requires to excel in one rater's evaluation to be considered for selection, this approach emphasizes the dimension in which the sample performs best amongst all dimensions.

\paragraph{Max Criteria}

Once the scores are aligned, we directly select the dimension with the highest rating to represent the integration result. This method is analogous to performing max-pooling across all dimensions, straightforwardly highlighting the dimension within the sample that has the highest rating.

\paragraph{Average}

For multi-rater integration, the most straightforward approach is to assume that each rater contributes equally to the overall quality of the data. Accordingly, we normalize the ratings provided by each rater for the samples. Following normalization, the average of these ratings is computed to obtain the final integrated rating.

\subsection{Analysis of Computational Cost}
\label{appendix:flops}

{This section analyzes the computational cost (measured in FLOPs) for various data rating and integration methods. The focus is on the cost introduced by applying raters to the pre-training data, excluding the main model’s training cost.  FIRE achieves more effective integration without introducing significant extra cost beyond the base raters. The FLOPs comparison is summarized in Table~\ref{tab:flops_comparison}.}

The analysis includes:
\begin{enumerate}
    \item Rater training (if applicable), 
    \item Rater inference over the entire dataset,
    \item Additional computation for FIRE's win-rate-based integration.
\end{enumerate}

\begin{table}[t]
\centering
\scalebox{0.9}{
\begin{tabular}{l|c}
\toprule
\textbf{Method} & \textbf{FLOPs} \\
\midrule
QuRating (single rater) & $\approx 8.17 \times 10^{20}$ \\
QuRating (mix of criteria, 4 raters) & $\approx 3.26 \times 10^{21}$ \\
FIRE (mix of criteria, 4 raters) & $\approx 3.26 \times 10^{21}$ \\
\bottomrule
\end{tabular}
}
\caption{Comparison of computational cost (FLOPs) across different rating methods.}
\label{tab:flops_comparison}
\end{table}

\paragraph{QuRating (Single Rater).} Each QuRating model is a 1.3B-parameter transformer. It is first fine-tuned on 500K examples (512 tokens each). The total FLOPs for training this rater is approximately:
\[
\text{FLOPs}_{\text{train}} \approx 6 \cdot N \cdot d^2 \cdot L \cdot T  \approx 4.2 \times 10^{19}
\]
Rater inference over the entire 627B-token dataset consumes:
\[
\text{FLOPs}_{\text{infer}} \approx 6 \cdot 627 \times 10^9 \cdot d^2 \cdot L \approx 8.13 \times 10^{20}
\]
\[
\text{FLOPs}_{\text{total}} \approx 8.17 \times 10^{20}
\]
\paragraph{QuRating (Mix of Criteria).} Integrating four QuRating raters (each covering a distinct aspect) requires four forward passes across the full dataset. Since retraining is not needed, total cost is:
\[
\text{FLOPs}_{\text{mix}} \approx 4 \times \text{FLOPs}_{\text{infer}} \approx 3.26 \times 10^{21}
\]

\paragraph{FIRE.} FIRE uses the same four QuRating raters as input, and hence shares the same inference cost. In addition, it estimates the win-rate-percentile mapping using only 20K pairwise comparisons per rater. The additional cost of this step is negligible compared to inference over 627B tokens.

\subsection{Integration of Other Raters}
\label{subsec:ior}
{FIRE has extremely high scalability: our approach is not directly related to the specific attributes of the raters. Therefore, as long as the raters can provide ratings for the samples and are not completely related to each other, they can be integrated using the FIRE method. We also attempt to integrate another raters: QuRating(Required Expertise), QuRating(Facts and Trivia), DSIR-Book, and DSIR-Wiki. Tabel \ref{tab:fire_rater_integration} shows that the average performance of FIRE, after integrating these four raters, is better than each individual rater and the Random baseline, indicating that our integration method is still effective on other raters.}

\begin{table*}[htbp]
\centering
\begin{tabular}{lccccccccc}
\toprule
Method & ARC-E & ARC-C & SciQ & LogiQA & BoolQ & HellaSw. & PIQA & W.G. & AVG. \\
\midrule
Random & 48.2 & 22.3 & 84.5 & 19.7 & 60.8 & 32.1 & 63.5 & 49.2 & 47.5 \\
DSIR with Book & 36.2 & 19.5 & 73.4 & 21.4 & 61.8 & 29.5 & 62.5 & 53.6 & 44.7 \\
DSIR with Wiki & 37.2 & 18.0 & 76.4 & 23.0 & 58.0 & 27.9 & 57.3 & 51.1 & 43.6 \\
QuRating (R.E.) & 50.6 & 23.2 & 83.9 & 22.6 & 61.4 & 30.2 & 59.8 & 49.8 & 47.7 \\
QuRating (F.T.) & 54.1 & 23.0 & 83.5 & 22.0 & 60.9 & 30.4 & 59.5 & 51.7 & 48.1 \\
\textbf{FIRE} & \textbf{57.1} & \textbf{25.9} & \textbf{84.9} & \textbf{20.8} & \textbf{61.5} & \textbf{31.5} & \textbf{60.1} & \textbf{50.3} & \textbf{49.0} \\
\bottomrule
\end{tabular}
\caption{Performance comparison by applying FIRE to integrate four raters: QuRating (Required Expertise), QuRating (Facts and Trivia), DSIR-Book, and DSIR-Wiki.}
\label{tab:fire_rater_integration}
\end{table*}

\subsection{Prompt for Multi-dimension Analysis}
\label{subsec:Multi-dimension analysis prompt}

\label{prompt_for_case_study}
\begin{tcolorbox}[colback=white!95!gray,colframe=gray!50!black,rounded corners,label={prompt-dot1}, title={Prompts for GPT-4o evaluation}]
\begin{lstlisting}[breaklines=true, xleftmargin=0pt, breakindent=0pt, columns=fullflexible, mathescape, numbers=none]
You are a data annotation expert. You should judge that {condition}

Aspects that should NOT influence your judgement: 
1. Which language the text is written in 
2. The length of the text 
3. The order in which the texts are presented 

Note that the texts are cut off, so you have to infer their contexts.
Here is the text:
[TEXT BEGIN]
{text}
[TEXT END]

Please follow the question order to respond. For answer, only respond yes or no.
Return the results for each question in the following json format:
[{
"quesion": "Is this text has a polished and beautiful writing style ?",
"reason": "Fill in the reason for the judgment here",
"answer": "yes/no"
},
...]
\end{lstlisting}
\end{tcolorbox}

We instruct the GPT-4o to evaluate the data selected through the raters, and here we show the prompts. For dimension, we consider four individual dimensions, the same as QuRating \citeauthor{wettig2024qurating} (\citeyear{wettig2024qurating}); and a comprehensive dimension:

\begin{itemize}
    \item Does this text have a polished and beautiful writing style?
    \item Does this text contain many facts and trivia? Prefer specific facts and obscure trivia over more common knowledge.
    \item Does this text have much educational value? E.g., it includes clear explanations, step-by-step reasoning, or questions and answers.
    \item Does this text require a lot of expertise and prerequisite knowledge to understand it?
    \item Does this text contain an informative signal for pretraining a large-language model? An informative data point should be well-formatted, contain some usable knowledge of the world, and strictly NOT have any harmful, racist, sexist, etc. content.
\end{itemize}





\section{Further Analysis of Experiments}

\subsection{Results of different combinations}
\label{subsec:Results of different combinations}
We present the results of different rater combinations in Table \ref{combination results}. The average scores of FIRE (W.S.+R.E.+F.T.) and FIRE (W.S.+R.E.+E.V.) both surpass the results obtained from combining any two of their individual raters. Furthermore, FIRE (W.S.+R.E.+F.T.+E.V.) achieves even better performance.

\begin{table*}[t]
    \centering
    \scalebox{0.85}{
    \begin{tabular}{l|
    >{\centering\arraybackslash}m{1.3cm} 
    >{\centering\arraybackslash}m{1.3cm}
    >{\centering\arraybackslash}m{1cm} 
    >{\centering\arraybackslash}m{1cm}
    >{\centering\arraybackslash}m{1cm}
    >{\centering\arraybackslash}m{1.2cm}
    >{\centering\arraybackslash}m{1cm}
    >{\centering\arraybackslash}m{1cm}
    >{\centering\arraybackslash}m{1cm}
    }
    \toprule
        \textbf{Method} & \textbf{ARC-E} & \textbf{ARC-C} & \textbf{SciQ} & \textbf{LogiQA} & \textbf{BoolQ} & \textbf{HellaSw.} & \textbf{PIQA} & \textbf{W.G.} & \textbf{AVG.} \\ 
    \midrule
        \multicolumn{10}{c}{Two Raters Integration} \\ 
    \midrule
        FIRE (W.S.+R.E.) & 54.5 & 24.6 & 85.6 & 19.7 & 61.3 & 31.3 & 60.4 & 51.8 & 48.7 \\ 
        FIRE (W.S.+F.T.) & 56.1 & 25.0 & 81.5 & 22.7 & 55.1 & 31.7 & 61.2 & 49.7 & 47.9 \\ 
        FIRE (R.E.+F.T.) & 56.9 & 26.1 & 84.7 & 20.7 & 62.0 & 30.5 & 59.4 & 50.0 & 48.8 \\ 
        FIRE (W.S.+E.V.) & 56.7 & 24.2 & 82.2 & 21.0 & 60.3 & \textbf{33.0} & 61.0 & 51.0 & 48.7 \\ 
        FIRE (R.E.+E.V.) & 53.0 & 24.7 & 84.1 & 22.4 & 61.2 & 31.3 & 58.7 & 49.7 & 48.1 \\
    \midrule
        \multicolumn{10}{c}{Three Raters Integration} \\
    \midrule
        FIRE (W.S.+R.E.+F.T.) & 59.0 & 25.4 & 85.6 & \textbf{25.5} & 57.5 & 31.9 & 61.8 & 51.1 & 49.7 \\ 
        FIRE (W.S.+R.E.+E.V.) & 57.8 & 25.9 & 84.5 & 20.6 & 62.0 & 32.7 & 60.5 & 51.3 & 49.4 \\ 
    \midrule
        \multicolumn{10}{c}{Four Raters Integration} \\ 
    \midrule
        FIRE (W.S.+R.E.+F.T.+E.V.) & 59.1 & 26.4 & 86.0 & 21.0 & 61.8 & 32.9 & 59.7 & 52.8 & 50.0 \\
    \bottomrule
    \end{tabular}
    }
    \caption{Downstream tasks results for different rater combinations. We report accuracy for each task, and the best performances are marked in bold. Abbreviations: HellaSw. = HellaSwag, W.G. = WinoGrande, AVG. = Average, W.S. = Writing Style, R.E. = Required Expertise, F.T. = Facts and Trivia, E.V. = Educational Value}
    \label{combination results}
\end{table*}

\subsection{Prompt merge effect} 
\label{subsec:Prompt merge effect}

\begin{table}[t]
    \centering
    \scalebox{0.9}{
    \begin{tabular}{l|cccc}
    \toprule
        ~ & W.S. & F.T. & E.V. & R.E. \\ 
    \midrule
        Sing.(Human) & 53.3 & 69.7 & 85.1 & 92.2 \\
        Sing.(GPT4) & 52.1 & 69.7 & 85.5 & 91.8 \\ 
        Comp.(GPT4) & 57.8 & 80.5 & 86.9 & 50.3 \\
    \midrule
        $\boldsymbol{\rho_{Human-Sin.(GPT4)}}$ & 0.81 & 0.85 & 0.86 & 0.77 \\ 
        $\boldsymbol{\rho_{Human-Com.(GPT4)}}$ & 0.72 & 0.64 & 0.72 & 0.32 \\ 
    \bottomrule
    \end{tabular}
    }
    \caption{Results of GPT-4 and human annotation for 3,000 samples. Sing. = Single, Comp. = Comprehensive.}
    \label{comprehensive_single}
\end{table}

We investigate whether combining multiple rules within a single prompt can effectively meet the evaluation standards of each rule. We randomly selected 3,000 data points and guided GPT-4's evaluation using a prompt that integrates multiple rules, alongside conducting individual rule-guided evaluations and separate human annotator evaluations for each criterion. To understand how GPT-4's holistic evaluation aligns with each individual dimension, we employed CoT \cite{wei2022chain} approach: the model first evaluates each rule separately, and then provides an overall evaluation. Each rule is assessed with a binary yes/no question, and after six evaluations, we average the results to obtain the final score (for human evaluations, this entails averaging the scores given by six annotators).

Table \ref{comprehensive_single} displays the percentage of data with a relatively high degree in each dimension, along with the correlation coefficients between GPT-4's and human evaluations. From the proportions of high-quality data in each dimension, it is evident that GPT-4's scores approximate human scores more closely when evaluated individually, with differences within a margin of 0.2 at most. However, GPT-4's scores exhibit greater variability when multiple rules are integrated. In terms of correlation coefficients, there is a strong correlation between GPT-4's individual scoring and human scoring, but this correlation significantly diminishes when it comes to comprehensive scoring. Particularly, in the 'Require Expertise' aspect, the correlation is only 0.32. This suggests that GPT-4's current capability to adhere to all rules in a single prompt is still inadequate.

\subsection{Orthogonality Function}
\label{appendix:Orthogonality Function}


\begin{figure}[t]
    \centering
    \includegraphics[width=\linewidth]{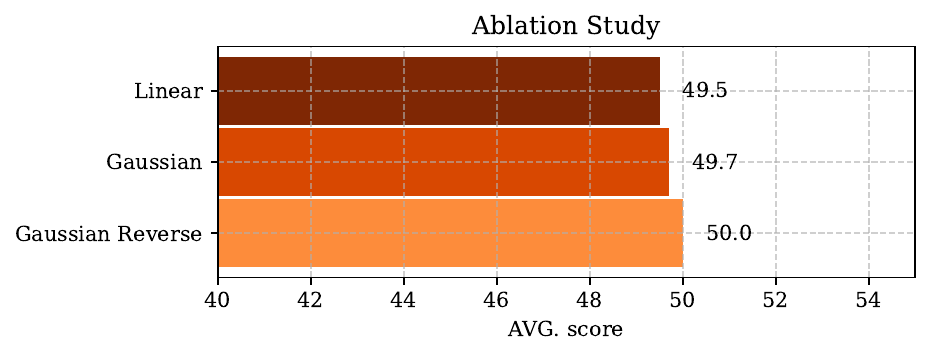}
    \caption{Ablation experiments evaluating the impact of different orthogonality functions on model performance.}
    \label{func_ablation study}
\end{figure}

There are various methods to calculate orthogonality, and we aim to identify the most effective one. We compare three functions for orthogonality calculation: \textit{linear}, \textit{gaussian}, and \textit{sym. gaussian }(the symmetrically processed gaussian function). As illustrated in Figure \ref{func_ablation study}, the three functions achieve comparable results, all surpassing the configuration without orthogonality, which demonstrates the effectiveness of incorporating orthogonality. Besides, the sym. gaussian function outperforms the other two. The gaussian function smooths the correlation coefficient around zero, while the sym. gaussian function amplifies changes in orthogonality near zero. The linear function, however, strikes a balance between these two. Our experimental results confirm that enhancing the rate of change around zero is more efficient, emphasizing the role of highly orthogonal raters, and intensifying the penalties for raters with high correlation.

\subsection{Effect of Sample/top-K}


\begin{figure}[htbp]
    \centering
    \includegraphics[width=1.0\linewidth]{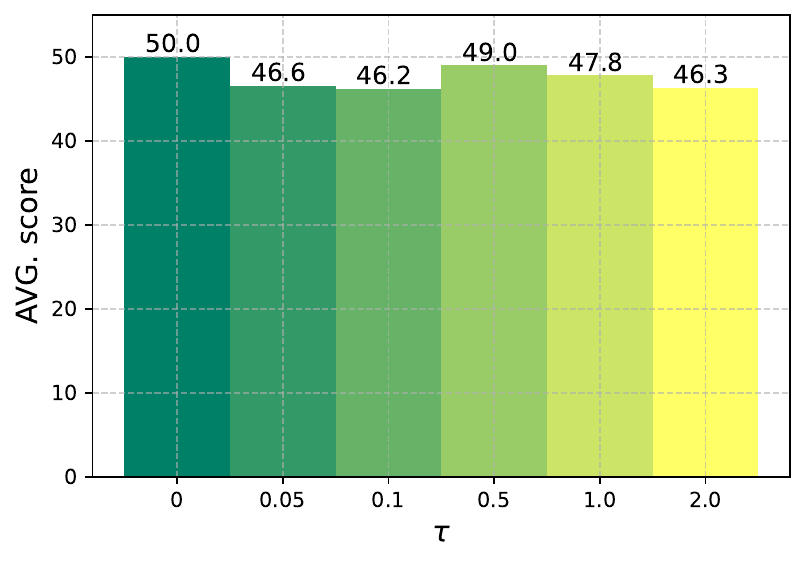}
    \caption{The impact of the sample temperature coefficient on model performance. $\tau$ is the temperature coefficient, and when $\tau=0$, it refers to top-K selection.}
    \label{fig:sample}
\end{figure}

Qurating \cite{wettig2024qurating} suggest that sampling is more effective than directly selecting the top-K data. In this experiment, we integrate four different raters to rigorously investigate the impact of sampling on the model's final performance. Specifically, we calculate the sampling probability for all rated samples using the following softmax formula:

\begin{equation}
    P(x) = \frac{exp(I(x)/\tau)}{\sum{exp(I(x)/\tau)}}
\end{equation}

where $\tau$ is the temperature parameter. We train a model of the same size as in the previous experiments. From the results presented in Figure \ref{fig:sample}, several key observations can be made:

(1) Our findings reveal that direct top-K selection outperforms sampling, further affirming the efficacy of our integration method. Additionally, \citeauthor{wettig2024qurating} (\citeyear{wettig2024qurating}) posits that sampling enhances data diversity, which is beneficial for model learning. Our results indicate that the top-K scores are higher than the sampling scores, demonstrating that the top-ranked data according to our integration rating exhibit a broader distribution rather than being concentrated in a single domain.

(2) Contrary to the optimal value of 2 suggested by \citeauthor{wettig2024qurating} (\citeyear{wettig2024qurating}), our analysis indicates that a relatively smaller $\tau$ value yields optimal results. A smaller $\tau$ accentuates the impact of ratings, suggesting that our method effectively selects higher-quality data. Furthermore, our multi-dimensional approach also accounts for the diversity of data types, ensuring a more comprehensive evaluation.

\section{Selected Data Cases}

We show the document cases rated by the single raters and FIRE in the Table \ref{data_cases}. For each method, we show the best/middle/worst sample.

\begin{table*}[t]
    \centering
    \scalebox{0.85}{
    \begin{tabular}{l|
    >{\arraybackslash}m{4.1cm}|
    >{\arraybackslash}m{4.1cm}|
    >{\arraybackslash}m{4.1cm}
    }
    \toprule
        Method & Best & Middle & Worst \\ \midrule
        Writing Style & {\tiny ... is the very thing that makes each person inimitable, the thing that allows us finally to see and celebrate one another's distinct natures. Once it is understood that this expanse must always exist, each person is free to become whatever it is they will become, unburdened by the need to shape themselves to fit their partner. And this individuation need not be a growing apart. For if each partner can remember the beauty and necessity of the expanse, then they can come to appreciate fully the ...} & {\tiny ... him some of that history," she said. Near the beginning of the session panelist Don Lemon of CNN played video of his story on what he called "a picture of racial unity," Sherrod's reunion with the elderly white farmers whose farm she helped save, after first not making a full effort at a nonprofit where she worked 24 years ago. The author of legislation that would require natural-gas companies to disclose hydraulic-fracturing fluids says she feels betrayed by industry groups that have spoken ...} & {\tiny ... - 2006 Y-T-D Stat Scoring Average (Actual) - 2006 Stat Scoring Average (Actual) - 2006 Y-T-D Stat Scoring Average (Actual) - 2006 Stat Scoring Average (Actual) - 2006 Y-T-D Stat Scoring Average (Actual) - 2006 Stat Scoring Average (Actual) - 2006 Y-T-D Stat Scoring Average (Actual) - 2006 Stat Scoring Average (Actual) - 2006 Y-T-D Stat Scoring Average (Actual) - 2006 Stat Scoring Average (Actual) - 2006 Y-T-D Stat Scoring Average (Actual) - 2006 Stat Scoring Average (Actual) - 2006 Y-T-D St ...} \\ \midrule
        Required Expertise & {\tiny ... induced climate change have used instrumental records to study how quickly climate is warming across different parts of the world. Our study uses a collection of natural archives that preserve information about past temperatures over a much longer period, spanning the last 500 years, to ask the question: "When did the sustained warming trends that we've seen in the 20th and 21st Centuries first begin?" ...} & {\tiny ... (by R.M. Butler?) in 'Folder 20' seen by Nick Sheaff, 1970s; Liam Swords, Achonry and its churches (Strasbourg: Éditions du Signe, 2007), 78(illus.). Nature: Additions, for Lady Fitzgerald Arnott. contractor: Michael O'Brien, Dun Laoghaire. Refs: IB 59, 27 Oct 1917, 551; MS letter in IAA (Acc. 88/118) from Rev.P. Kilkenny to Butler, 6 Jun 1919, states that he cannot resist 'revolutionary, ...} & {\tiny ... more. Awesome. : 1138 This is one awesome article.Thanks Again. Really Cool. : 1136 Appreciate you sharing, great article. : 1135 I cannot thank you enough for the blog post.Much thanks again. Great. : 1134 I think this is a real great post.Much thanks again. Cool.: 1133 A big thank you for your blog article.Really looking forward to read more. Great. : 1132 Great, thanks for sharing this blog post.Much thanks again. ...} \\ \midrule
        Facts and Trivia & {\tiny ... Native American to carry the United States flag at the opening ceremony of the Olympic Games: Clarence "Taffy" Abel (Chippewa).  1926  First Native American in the NHL New York Rangers November 16, 1926: Clarence "Taffy" Abel (Chippewa).  First Native American woman to hold state office in Oklahoma: Jessie Elizabeth Randolph Moore (Chickasaw). First national reform group with only Native American membership: National Congress of American Indians (NCAI) by Zitkala-Sa ...} & {\tiny ... I probably only understand two-thirds of what's going on, honestly, but it's still damn good stuff. (Audiobook) Being Mortal by Atul Gawande: Heard raves about this non-fiction about elder and end-of-life care from many a Rioter, and it's ringing all my Books That Make Me Want To Change the World buttons. (audiobook) Half-Resurrection Blues by Daniel José Older: Loved Older's Salsa Nocturna, so I speedily picked up this novel about a half-alive, half-dead sort-of-secret-agent who works for ...} & {\tiny ... T'AIME JE T'AIME JE T'AIME JE T'AIME JE T'AIME JE T'AIME JE T'AIME JE T'AIME JE T'AIME JE T'AIME JE T'AIME JE T'AIME JE T'AIME JE T'AIME JE T'AIME JE T'AIME JE T'AIME JE T'AIME JE T'AIME JE T'AIME JE T'AIME JE T'AIME JE T'AIME JE T'AIME JE T'AIME JE T'AIME JE T'AIME JE T'AIME JE T'AIME JE T'AIME JE T'AIME JE T'AIME JE T'AIME JE T'AIME JE T'AIME JE T'AIME JE T'AIME JE T'AIME JE T'AIME JE T'AIME JE T'AIME JE T'AIME JE T'AIME ...} \\ \midrule
        Educational Value & {\tiny ... Learn what "big history" is and how scholars apply this approach to the story of humanity. Gain new understanding of the complete sweep of human history, across all civilizations and around the world. Use the lens of history to find out what makes us human, why the world exists as it does today, and where we might be going in the future. See how the environment, population growth, social complexity, and more have driven the rise and fall of civilizations over ...} & {\tiny ... a similar set of models as Fig. 2, this time displaying the C-star fractions for models with varying $f_{\rm CE}$  between 0.008 and 0.4 at the base of the convective envelope. In this instance, there is a far more straightforward interpretation, with an increase in $f_{\rm CE}$  producing an increased C-star fraction, in almost all cases. Furthermore, there is more readily acceptable agreement with the observed C-star fractions than was the case for the ...} & {\tiny ... 05:17:33 https://cse.google.com.bo/url?sa=t url=https://toppornsites.mobi 2023-01-27 05:17:33 https://cse.google.com.br/url?sa=t url=https://toppornsites.mobi 2023-01-27 05:17:33 https://cse.google.com.by/url?sa=t url=https://toppornsites.mobi 2023-01-27 05:17:33 https://cse.google.com.bz/url?sa=t url=https://toppornsites.mobi ...} \\ \midrule
        FIRE & {\tiny In mathematics, the differential geometry of surfaces deals with the differential geometry of smooth surfaces with various additional structures, most often, a Riemannian metric. Surfaces have been extensively studied from various perspectives: extrinsically, relating to their embedding in Euclidean space and intrinsically, reflecting their properties determined solely by the distance within the surface as measured along curves on the surface. One of the fundamental concepts investigated is ...} & {\tiny ... Everett, Washington: Charlotte Murray, 2010. Second Edition of 10. 7.5 x 5.25"; 56 pages. Images captured using three digital cameras, a Nikon Coolpix 5700, a Nikon D70, and a Nikon D80. Printed with Epson Photo Stylus R2880 printer with UltraChrome K3 pigment inks on Epson Premium Presentation Paper Matte. Papyrus font. Coil binding with green see through cover and lightweight cardboard back. Colophon: "The Dead Tree Scrolls first edition was created in 2005. This second edition was ...} & {\tiny ... HBP - MILLER; MORALES. SF - WILKERSON(2). SB - EPPS(9); HORAN(3); PINDER(4). CS - EPPS(3). Clemson                IP  H  R ER BB SO AB BF Justin Sarratt......  6.1  6  3  3  1  8 24 26 Alex Frederick......  0.1  0  0  0  0  0  1  1 Joseph Moorefield...  0.1  0  0  0  0  1  1  1 Matt Campbell.......  2.0  0  0  0  3  3  6  9 Virginia Tech          IP  H  R ER BB SO AB BF Marc Zecchino.......  7.1  7  5  4  1  7 27 29 Jake Atwell.........  0.2  3  3  3  0  0  5  5 Sean McDermott ...} \\ 
    \bottomrule
    \end{tabular}
    }
    \caption{Data cases are randomly selected from the top, middle, and bottom 0.01\% of training samples, representing the best, middle, and worst cases for each method.}
    \label{data_cases}
\end{table*}

\end{document}